\documentclass[12pt,twoside]{report}



\newcommand{\reporttitle}{A Deep Autoregressive Model for Dynamic Combinatorial Complexes}

\newcommand{\degreetype}{Artificial Intelligence and Machine Learning}


%
%
%
%
%
\usepackage[a4paper,hmargin=2.8cm,vmargin=2.0cm,includeheadfoot]{geometry}
\usepackage{textpos}
\usepackage{tabularx,longtable,multirow,subfigure,caption}
\usepackage{fancyhdr} 
\usepackage{url} 
\usepackage[english]{babel}
\usepackage{amsmath}
\usepackage{graphicx}
\usepackage{dsfont}
\usepackage{epstopdf} 
\usepackage{backref} 
\usepackage{array}
\usepackage{latexsym}
\usepackage{amsthm}
\usepackage{algorithm}
\usepackage{algorithmic}

\usepackage[pdftex,pagebackref,hypertexnames=false,colorlinks]{hyperref} 
\usepackage{booktabs}

\hypersetup{pdftitle={},
  pdfsubject={}, 
  pdfauthor={},
  pdfkeywords={}, 
  pdfstartview=FitH,
  pdfpagemode={UseOutlines},
  bookmarksnumbered=true, bookmarksopen=true, colorlinks,
    citecolor=black,%
    filecolor=black,%
    linkcolor=black,%
    urlcolor=black}

\usepackage[all]{hypcap}

\usepackage{color}
\usepackage[tight,ugly]{units}
\usepackage{float}
\usepackage{tcolorbox}
\usepackage[colorinlistoftodos]{todonotes}

\newtheorem{proposition}{Proposition}
\newtheorem{remark}{Remark}
\newtheorem{definition}{Definition}
\newtheorem{example}{Example}
\usepackage{amssymb}
\usepackage{multicol} 


\setlength{\parindent}{0em}  

\setlength{\headheight}{14.5pt}
\pagestyle{fancy}

\fancyfoot[ER,OL]{\sffamily\textbf{\thepage}}
\fancyfoot[OC,EC]{\sffamily }

\captionsetup{margin=10pt,font=small,labelfont=bf}


\def\@makechapterhead#1{%
  \vspace*{10\p@}%
  {\parindent \z@ \raggedright \sffamily
    \interlinepenalty\@M
    \Huge\bfseries \thechapter \space\space #1\par\nobreak
    \vskip 30\p@
  }}

\def\@makeschapterhead#1{%
  \vspace*{10\p@}%
  {\parindent \z@ \raggedright
    \sffamily
    \interlinepenalty\@M
    \Huge \bfseries  #1\par\nobreak
    \vskip 30\p@
  }}

\allowdisplaybreaks


\newcommand{\R}[0]{\mathds{R}} 
\newcommand{\B}[0]{\mathds{B}} 
\newcommand{\Prob}{\mathds{P}}












\def\eqref#1{equation~\ref{#1}}









\def\1{\bm{1}}


\DeclareMathOperator{\CC}{CC}
\newcommand{\CCX}{\mathcal{X}}
\newcommand{\CCY}{\mathcal{Y}}
\newcommand{\CCZ}{\mathcal{Z}}


\newcommand{\N}{\mathbb{N}} 
\newcommand{\Znon}{\mathbb{Z}_{\ge 0}} 

\DeclareMathOperator{\rk}{rk}

\DeclareMathOperator{\Int}{int}

\DeclareMathOperator{\CCN}{CCNN}

\date{September 2024}

\begin{document}

\begin{titlepage}

\newcommand{\HRule}{\rule{\linewidth}{0.5mm}} 


\includegraphics[width = 4cm]{./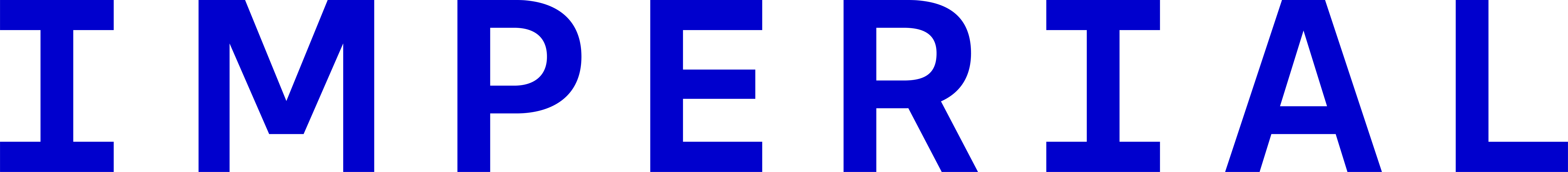}\\[0.5cm] 

\center 


\textsc{\Large Imperial College London}\\[0.5cm] 
\textsc{\large Department of Computing}\\[0.5cm] 


\HRule \\[0.4cm]
{ \huge \bfseries \reporttitle}\\ 
\HRule \\[1.5cm]
 

\begin{minipage}{0.4\textwidth}
\begin{flushleft} \large
\emph{Author:}\\
Ata Tuna 
\end{flushleft}
\end{minipage}
~
\begin{minipage}{0.4\textwidth}
\begin{flushright} \large
\emph{Supervisor:} \\
Dr. Tolga Birdal 
\end{flushright}
\end{minipage}\\[4cm]

\vfill 
Submitted in partial fulfillment of the requirements for the MRes degree in
\degreetype~of Imperial College London\\[0.5cm]

\makeatletter
\@date 
\makeatother

\end{titlepage}

\pagenumbering{roman}
\clearpage{\pagestyle{empty}\cleardoublepage}
\setcounter{page}{1}
\pagestyle{fancy}

\begin{abstract}
We introduce DAMCC (Deep Autoregressive Model for Dynamic Combinatorial Complexes), the first deep learning model designed to generate dynamic combinatorial complexes (CCs). Unlike traditional graph-based models, CCs capture higher-order interactions, making them ideal for representing social networks, biological systems, and evolving infrastructures. While existing models primarily focus on static graphs, DAMCC addresses the challenge of modeling temporal dynamics and higher-order structures in dynamic networks.

DAMCC employs an autoregressive framework to predict the evolution of CCs over time. Through comprehensive experiments on real-world and synthetic datasets, we demonstrate its ability to capture both temporal and higher-order dependencies. As the first model of its kind, DAMCC lays the foundation for future advancements in dynamic combinatorial complex modeling, with opportunities for improved scalability and efficiency on larger networks.

\end{abstract}

\cleardoublepage
\section*{Acknowledgments}
I thank my supervisor Tolga Birdal for introducing me to Topological Deep Learning and igniting my interest in the subject. Most importantly, I would like to thank my parents for their unwavering support throughout my prolonged academic journey.

\clearpage{\pagestyle{empty}\cleardoublepage}

\fancyhead[RE,LO]{\sffamily {Table of Contents}}
\tableofcontents

\clearpage{\pagestyle{empty}\cleardoublepage}
\pagenumbering{arabic}
\setcounter{page}{1}
\fancyhead[LE,RO]{\slshape \rightmark}
\fancyhead[LO,RE]{\slshape \leftmark}

\setlength{\parskip}{1em}  


\chapter{Introduction}

Networks are powerful tools for modeling various real-life phenomena due to their ability to capture relationships and interactions between entities in a structured manner. In real life, many complex systems, from social interactions and transportation systems to biological processes and technological infrastructures, can be represented as networks. By modeling these systems as networks, we can gain insights into their behaviour, identify key components, and even predict future developments. Networks allow for the analysis of dependencies, patterns, and flows of information or resources, making them very useful in numerous domains such as epidemiology \cite{panagopoulos2021transfergraphneuralnetworks}, chemistry \cite{guo2023graphbasedmolecularrepresentationlearning}, biology \cite{li2022graphrepresentationlearningbiomedicine}, communication \cite{suárezvarela2022graphneuralnetworkscommunication}, and social science \cite{Goldenberg2021SocialNA}. Thus, network generation is a useful tool for both theoretical advancements and practical applications, as it enables us to explore, simulate, and predict the behaviour of systems \cite{taggen,gupta_tigger_2022,clarkson_damnets_2023, inbook}.

In the past decade, generative deep learning, particularly in natural language processing and image generation, yielded groundbreaking results via Deep Belief Networks (DBNs) \cite{Hinton2006AFL}, Generative Adversarial Networks (GANs) \cite{GANs}, Variational Autoencoders (VAEs) \cite{kingma2022autoencoding}, Diffusion Models \cite{ho2020denoisingdiffusionprobabilisticmodels},  Auto-Regressive Models \cite{oord2016pixel}, and Transformer Models \cite{vaswani2023attention} to name a few. These models were adapted to simple topological structures called graphs to model molecules \cite{decao2022molganimplicitgenerativemodel, samanta2019nevaedeepgenerativemodel}, social networks \cite{bojchevski2018netgangeneratinggraphsrandom}, traffic \cite{cui2019trafficgraphconvolutionalrecurrent}, and many other systems. 

However, network phenomena as they exist in nature and society are too complex to be represented by binary relationships, requiring more flexible domains for analysis. For instance, in a party setting, interactions may involve more than one person; molecules may experience multiple forces and more complex interactions; social media users interact via chats, group chats, feeds, hashtags, pages, and so on.

Combinatorial complexes (CCs) generalise some topological structures (discussed in the preliminaries). CCs can better capture and represent higher-order relationships that exist in real-life phenomena modeling multi-faceted interactions and hierarchical structures that are inherent in many natural and artificial systems. We explain CCs in detail in Chapter~\ref{ccs}. Networks modeled as CCs have the potential to be used for simulations, predictions, and analyses that mirror the complexities of the real world, ultimately leading to more accurate models and better decision making tools. To our knowledge, the only attempt at a combinatorial complex generative model in the literature is CCSD \cite{carrel_combinatorial_2024}.

Many of the networks described above, in fact, have a temporal element regarding their evolution. Social networks, for example, change over time as new connections are formed and old ones dissolve. Similarly, transportation networks evolve with the construction of new routes or closures due to maintenance. In biological systems, gene regulatory networks or neural connections can dynamically change in response to external stimuli or developmental processes. Despite the inherent temporal dynamics in these systems, much of the current research on network generation focuses on static models, ignoring the crucial aspect of how networks evolve over time. This gap in the literature calls for more attention to temporal networks, where both the structure and the timing of interactions are essential for accurate modeling. Understanding how networks change over time can provide deeper insights into processes such as the spread of diseases, information diffusion, or network resilience in the face of disruptions. Temporal network generation models are therefore a promising avenue for capturing the full complexity of real-world phenomena, allowing us to better understand and predict the dynamic behaviour of evolving systems. It is surprising how little attention temporal (or dynamic) networks have received in machine learning literature. Our model DAMCC (A Deep Autoregressive Model for Dynamic Combinatorial Complexes), influenced by DAMNETS \cite{clarkson_damnets_2023}, is our attempt to address this gap by extending network generation to dynamic combinatorial complexes. While many existing models focus on static graphs, and rarely dynamic graphs, DAMCC aims to not only accommodate higher-order interactions that are inherently present in real-world systems, but it also attempts to capture the network's evolution over time. By leveraging a deep autoregressive framework, DAMCC can model both the creation and dissolution of higher-order relationships, accounting for temporal patterns and complex dependencies that static graph based models overlook.

\section{Objective}
\label{objective}

The objective of this report is to present a model capable of generating a prediction of the next combinatorial complex given a time series of combinatorial complexes. 

More formally, we have a sequence or multiple sequences of length $T$ of CCs:

\[(CC_t)_{t=0}^T = (CC_1, CC_2, \dots, CC_T). \]

We assume that $(CC_t)_{t=0}^T$ is Markovian, i.e. all the information that can be inferred regarding a CC is contained in the previous CC in the sequence. So we assume that each $CC_{t}$ is a random variable and 

\[CC_{t} \sim p(\cdot \mid  CC_{t-1}) = p(\cdot \mid  CC_{t-1}, CC_{t-2}, \dots, CC_{0}).\]

 The goal of our model is to learn an approximation  $\hat{p}(\cdot \mid  (CC_{t-1}))$ from which we can sample a prediction $\hat{CC}_{t}$.

\section{Outline of This Report}

\textbf{Chapter 1: Introduction} introduces the motivation for network generation, particularly focusing on combinatorial complexes and their relevance to real-world phenomena. It also outlines the objectives and the contributions of this work.

\textbf{Chapter 2: Related Work} surveys the existing literature on network generation models. This includes models like graph neural networks (GNNs), simplicial complexes, hypergraphs, and other relevant higher-order models, along with a focus on temporal networks and their generation. We highlight key differences and the gaps in temporal network modeling that the proposed model addresses.

\textbf{Chapter 3: Preliminaries} provides the mathematical and theoretical foundations required for understanding combinatorial complexes and topological deep learning. This includes key definitions and concepts such as combinatorial complexes, cellular complexes, simplicial complexes, hypergraphs, and graphs.

\textbf{Chapter 4: Methodology \& Contribution} presents the primary contributions of this work, including the design of a novel autoregressive model for temporal combinatorial complex generation (DAMCC). We also introduce the loss functions used for optimization and describe the architecture of the model, which leverages higher-order attention mechanisms.

\textbf{Chapter 5: Experiments} describes the experimental setup, including the datasets used, evaluation metrics, and optimization strategies. We compare the proposed DAMCC model with existing state-of-the-art models on a range of synthetic and real-world datasets, followed by an in-depth analysis of the results.

\textbf{Chapter 6: Conclusion} summarizes the main findings of the report, discusses the implications of the research, and provides suggestions for future work.

Appendices provide additional details on mathematical formulations, algorithms, and code implementations used throughout the study.

\chapter{Related Work}

\section{Static Networks Across Various Domains} \label{current}

\subsection{GNNs} 

As graphs are the oldest domain among the ones considered in this review, they also have the highest number of papers published utilising them. Graph based protein language models have had success with \cite{Jha2022, Jha2023}, where the ``input to the language model was the protein sequence, and the output was the feature vector for each amino acid of the underlying sequence" \cite{Jha2022}. \cite{Zhangtraffic} employed a Kernel-Weighted Graph Convolutional Network (KW-GCN) approach and applied it to commute and traffic modeling, where the model learns convolutional kernels and their linear combination weights on nodes in the graph, which significantly outperformed previous state of the art models. \cite{Amar2014TitleC} demonstrated that module maps were very suitable for the analysis of heterogeneous omic data. Edge-weighted graphs have been successfully utilised to model infectious diseases \cite{Manriquez2021-cu, Manriquez2021-ur}. \cite{Yan_Xiong_Lin_2018} introduced Spatial-Temporal Graph Convolutional Networks (ST-GCN) for human action recognition and outperformed the main models available at the time. Regarding graph network generation, recent works are \cite{li2018learningdeepgenerativemodels, you2018graphrnngeneratingrealisticgraphs, liao2020efficientgraphgenerationgraph}, and most importantly, BiGG \cite{dai_scalable_2020}, whose approach is very similar to ours.

\subsection{Simplicial Complexes and Hypergraphs}

These domains found significant use in signal processing. \cite{Barbarossa2018LEARNINGFS} expands the tools developed on graphs to the analysis of signals defined on simplicial complexes. The same authors published \cite{Barbarossa2019TopologicalSP} with further developed techniques over SCs and applied them to traffic analysis over wireless networks. Further advancements in signal processing came from Hodge theory where \cite{SCHAUB2021108149} utilised Hodge Laplacian matrices generalised properties of the Laplacian matrix using simplicial complexes.

\cite{DBLP:journals/corr/abs-2112-10570} proposed dynamic hypergraph
convolutional networks (DHGCN) and came up with a human skeleton based action recognition model. \cite{dynamicjiang, feng2019hypergraph} proposed a framework with hypergraph neural networks  (DHGNN) made of concatenated modules: dynamic hypergraph construction (DHG) and hypergraph convolution (HGC). \cite{Arya} made use of hypergraphs representing social networks to model relationships. ``UniGNN, a unified framework for interpreting
the message passing process in graph and hypergraph neural networks'' was further proposed by \cite{Huang2021UniGNNAU}. \cite{yang2023convolutional} is a very recent convolutional model defined on simplices concerned with link prediction and trajectory classification. \cite{pmlr-v198-yang22a} explored many-body interaction models via SCs.

\subsection{Cellular and Combinatorial Complexes}

Cell Attention Networks (CANs) were proposed by \cite{giusti2022cell}, which is ``a neural architecture operating on data defined over the vertices of a graph, representing the graph as the 1-skeleton of a cell complex introduced to capture higher-order interactions.''
CCs were introduced as recently as 2022 with \cite{hajij2023topological}, and there have been very few publications regarding them. \cite{hajij2023topological} provided  Python modules for topological deep learning: TopoNetX \cite{TopoNetX}, TopoEmbedX \cite{TopoEmbedX} and TopoModelX \cite{TopoModelX}. \cite{Carrel_CCSD_-_Combinatorial_2023} is the most recent paper introducing a score-based generative model on CCs. At the time of the submission of this paper, no other generative model regarding CCs exists within the literature. 

\section{Temporal Network Generation}

Despite their potential use, temporal networks are surprisingly neglected. Within the literature, the only temporal networks we have found were defined on graphs. We list some earlier models first and then explain in detail the most relevant ones to us. The earliest attempts at network time series generation were not capable of arbitrary network generation but rather focused on hard coding to capture a certain property of a network. One of the first models to attempt to model arbitrary networks was TagGen \cite{taggen}, a random walk based model with a self attention mechanism. DYMOND \cite{zeno_dymond_2021} is another notable model that attempts to model the arrival of subgraphs called ``motifs''. DAMNETS paper compared their performance to these models as well as AGE. DYMOND, TagGen, TIGGER\cite{gupta_tigger_2022}, and D2G2\cite{zhang2021disentangleddynamicgraphdeep} take an entire network series and generate an entire network series similar to that series. Since their task is different, the DAMNETS paper let DYMOND and TagGen have access to the test data during the experiments. Still, DAMNETS outperformed them. \cite{souid_temporal_2024} is a paper which does not introduce a new model, but introduces a novel evaluation metric appropriated to graph time series. \cite{souid_temporal_2024} therefore separates DAMNETS-AGE, DYMOND-TIGGER-D2G2 evaluations. Since their task is different, we won't include DYMOND, D2G2, TIGGER, and TagGen in our experiments in Chapter~\ref{sec:experiments}, and compare only DAMNETS, AGE, and DAMCC. Regarding our task, there exist no models in the literature which can generate updates of CC time series. The closest proxies are only AGE and DAMNETS, and we compare our model's performance against these two only with the addition of a ``random model''. CCSD \cite{carrel_combinatorial_2024} is also worth mentioning as it is the only CC generative model. CCSD took a diffusion model based approach by representing CC's as a number of very large incidence matrices, which included every possible row, approximated the score function of this diffusion process, and cut off values below a threshold to sample a binary incidence matrix. This approach is computationally very expensive, especially for larger graphs that we are interested in.  Therefore, we base our sampling method on DAMNETS, which is based on BiGG. We first explain AGE, as it is simpler than DAMNETS. Then we explain BiGG before DAMNETS as DAMNETS borrows the decoder mostly from BiGG.

\textbf{AGE} \cite{inbook} follows an encoder-decoder structure. AGE generates an adjacency matrix row by row, and the rows element by element. It first concatenates node features with the adjacency matrix. It then encodes every node to a high level embedding. The decoder takes this encoded matrix along with the target graph adjacency matrix. Then it decodes every row element by element, i.e. one node at a time, by several stacked attention modules that alternate self attention and source-target attention layers.

\textbf{BiGG} \cite{dai_scalable_2020} works on adjacency matrices, sampling row by row. Every row sampled depends on the previous rows sampled in an autoregressive manner. With a second autoregressive component, it samples $1$'s across the row via a binary tree $\mathcal{T}$. $\mathcal{T}$ operates as follows: each node of this tree is responsible for a range of graph nodes \([n_l, n_r]\). Starting the root tree node $[1,n]$, at every tree node the tree makes a decision as to whether to eliminate the left half \([n_l, \left\lfloor\frac{n_l+n_r}{2}\right\rfloor]\) of the graph nodes. After that, the tree decides for the right half \([\left\lceil\frac{n_l+n_r}{2}\right\rceil, n_r] \). If the tree decides not to eliminate, we descend to the tree node of the half we decided to not eliminate, and we iterate the process. BiGG tracks down this traverse via state variables produced by Transformers\cite{vaswani2023attention}, and RNNs\cite{Rumelhart1986LearningRB}. We explain this process in detail in Algorithm~\ref{algo2} and Example~\ref{tree}. Therefore, for any row in the adjacency matrix, we sample in $O(\mathcal{T})$ decision steps. If $N$ is the number of graph nodes, since the maximum depth of the tree is $\log(N)$, we have the upper bound $|\mathcal{T}| \leq N \log(N)$. In practice, if adjacency matrices are sparse, this algorithm samples efficiently by eliminating a portion of the graph nodes at every decision. This is the approach in DAMNETS and DAMCC.

\textbf{DAMNETS} \cite{clarkson_damnets_2023} first encodes every graph node via GAT\cite{velickovic_graph_2018} and then decodes these embeddings via a very similar tree structure to BiGG with the same traverse but also improves the process the sampling delta matrices $\Delta$ defined by:

\[
\Delta^{t} = A^t-A^{t-1},
\]

where $A^t$ is the adjacency matrix (Definition~\ref{def:adj}) of timestep $t$, therefore taking advantage of the sparsity of $\Delta$. At every tree leaf, DAMNETS also augments BiGG by making an extra decision as to whether to sample the leaf or not, which corresponds to a change from the previous graph.

\chapter{Preliminaries}

\section{Topological Deep Learning}

\subsection{Background and Emergence: Early Developments}

Strictly speaking, Topology is a discipline in pure mathematics and a sub-discipline in algebra which studies geometric objects that preserve certain properties under continuous deformations. For a more mathematical study, \cite{postol2023algebraic} is an up-to-date textbook. We will not delve deep into pure mathematics in this review except for some useful mathematical constructs borrowed from Topology.

Topology's first use for ML purposes first emerged in the form of Topological Data Analysis (TDA) in the highly influential papers \cite{Edelsbrunner2002, Zomorodian2005} which are seminal in the development of persistent homology, an indispensable method early on the field for extraction of features from more complex datasets \cite{pun2018persistenthomologybased}. Their work was later developed and popularized in a landmark article in 2009 \cite{Carlsson2009}. ``TDA is mainly motivated by the idea that topology and geometry provide a powerful approach to infer robust qualitative, and sometimes quantitative, information about the structure of data" \cite{Chazal21}.
Integration of TDA with Machine Learning has been a gradual progress and initially constrained to data analysis and feature extraction rather than integration within learning architectures \cite{Nicolau2011TopologyBD, lima2023image}.

\section{Foundations of Topological Deep Learning}

Since TDL is a newly emerging field, terminology and notation for the newly invented mathematical concepts are yet to be universal. To remedy this, \cite{hajij2023topological} aimed to provide a unified framework for the future papers in the field. This chapter will follow the definitions outlined in that document. \cite{papillon2023architectures} follows as a more succinct survey of the architectures developed so far. 

Methods before TDL relied on data defined in traditional data structures. TDL allows this data to be defined on different \emph{domains}, such as a graph or a set, or novel objects such as \emph{Simplicial Complexes} (SCs), \emph{Cellular Complexes (CWs)}, \emph{Hypergraphs} and most recently, \emph{Combinatorial Complexes} (CCs), which generalise all the aforementioned and thus provide ultimate flexibility in data representations.

In the following sections, we introduce these aforementioned objects closely following the definitions in \cite{hajij2023topological} mostly verbatim with occasional fixes and adaptations, and provide remarks as to how these definitions are to be first interpreted and build up to \emph{Combinatorial Complexes CCs}. We include these definitions in this work so that a reader who is new to the subject will not have to constantly go back and forward between this work and \cite{hajij2023topological}, and that this report is mostly self-contained. We follow by introducing Topological Neural Networks defined on these domains. Let us begin.

\subsection{Definitions Building Up To Combinatorial Complexes}

\begin{tcolorbox}
[width=\linewidth, sharp corners=all, colback=white!95!black]
\begin{definition}[Neighbourhood function]
\label{NS}
Let $S$ be a nonempty set.
A \textbf{neighbourhood function} on $S$
is a function $\mathcal{N}\colon S\to\mathcal{P}(\mathcal{P}(S))$
that assigns to each point $x$ in $S$
a nonempty collection $\mathcal{N}(x)$ of subsets of $S$, where $\mathcal{P}$ denotes the powerset.
The elements of $\mathcal{N}(x)$ are called \textbf{neighbourhoods} of $x$ with respect to $\mathcal{N}$.
\end{definition}
\end{tcolorbox}

    Notice that the definition above says not much about the structure of the underlying set. According to the above definition, it is just a function which maps an element from a set to a collection of subsets of that set. Intuitively, a neighbourhood is simply a subset of $\mathcal{S}$. We need to bring some constraints over the above definition to make it a useful tool. The following definition is a start at that.

\begin{tcolorbox}
 [width=\linewidth, sharp corners=all, colback=white!95!black]
\begin{definition}[Neighbourhood topology]
\label{NT}
Let $\mathcal{N}$ be a neighbourhood function on a set $S$.
$\mathcal{N}$ is called a \textbf{neighbourhood topology} on $S$ if it satisfies the following axioms:
\begin{enumerate}
\item If $N$ is a neighbourhood of $x$, then $x\in N$.
\item If $N$ is a subset of $S$ containing a neighbourhood of $x$,
then $N$ is a neighbourhood of $x$.
\item The intersection of two neighbourhoods of a point $x$ in $S$ is a neighbourhood of $x$.
\item Any neighbourhood $N$ of a point $x$ in $S$ contains a neighbourhood $M$ of $x$ such that $N$ is a neighbourhood of each point of $M$.
\end{enumerate}
\end{definition}
\end{tcolorbox}

    Intuitively, a neighbourhood topology defines some sort of proximity relationship in a set. This is going to be very useful for the construction of higher-order deep learning models as we would like to define certain mathematical structures that represent data in real life, and define a neighbourhood topology on this data so that we can have this structure carry ``messages'' across these neighbourhoods. What we mean by ``messages'' will be more rigorously defined in this chapter. The next definition is to marry the ideas defined above.

 \begin{tcolorbox}
 [width=\linewidth, sharp corners=all, colback=white!95!black]
\begin{definition}[Topological space]
\label{def:topospace}
A pair $(S,\mathcal{N})$ consisting of a nonempty set $S$ 
and a neighbourhood topology $\mathcal{N}$ on $S$ is called a \textbf{topological space}.
\end{definition}
\end{tcolorbox}

Now we define some topological objects and build up to combinatorial complexes. Let us start with the most familiar one.

\begin{tcolorbox}
[width=\linewidth, sharp corners=all, colback=white!95!black]
\begin{definition}[Graph]
\label{graph:main}
A \textbf{graph} on a nonempty set $V$ (called the set of vertices) is a pair $(V,E)$, where $E$ is a subset of $\mathcal{P}_2(V)$, and $\mathcal{P}_2(V)$ denotes the set of all 2-element subsets of $V$. Elements of $E$ are called \textbf{edges}.
\end{definition}
\end{tcolorbox}

\begin{example}
\label{graph:example}
let $V = \{0, 1, 2, \ldots, 9\}$ and

$E = \{\{1, 2\}, \{1, 6\}, \{1, 7\}, \{1, 8\}, \{2, 3\}, \{2, 6\}, \{3, 4\}, \{4, 5\}, \{6, 7\}, \{7, 8\}, \{8, 9\}\}$. Then this graph \(G\) is a pair \(G = (V, E)\). Figure~\ref{fig:graph} is a visual representation of G.

\end{example}

\begin{figure}[htbp]
\centering
\includegraphics[width = 0.8\hsize]{./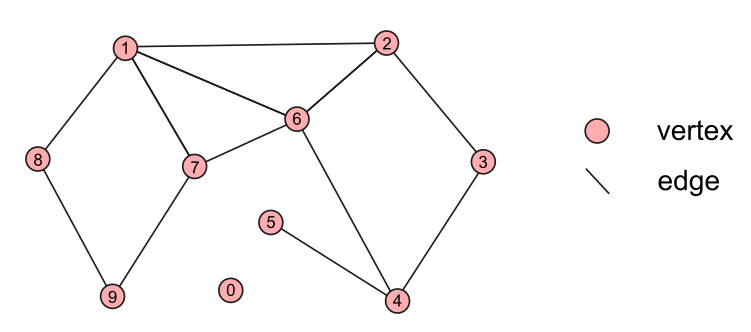}
\caption{A visual representation of an example graph in Example~\ref{graph:example}.}
\label{fig:graph}
\end{figure}

\begin{tcolorbox}
[width=\linewidth, sharp corners=all, colback=white!95!black]
\begin{definition}[Hypergraph]
\label{hyperG:main}
A \textbf{hypergraph} on a nonempty set $S$ is a pair $ (S,\mathcal{X})$, where $\mathcal{X}$ is a subset of $\mathcal{P}(S)\setminus\{\emptyset\}$. Elements of $\mathcal{X}$ are called \textbf{hyperedges}.
\end{definition}
\end{tcolorbox}

\begin{remark}
    From Definition~\ref{hyperG:main}, one can see graphs are simply a special case of hypergraphs where each hyperedge has exactly 2 elements. Therefore, hypergraphs generalise graphs and extend graphs which only include binary relationships between the vertices. Notice that hypergraphs allow hyperedges to be subsets of other hyperedges.
\end{remark}

\begin{example}
    \label{hypergraph:example}
    let $S = V = \{0, 1, 2, \ldots, 9\}$ and $\CCX = \{ \{1, 2\}, \{2, 3\}, \{2, 6\}, \{3, 4\}, \{4, 5\}, \{1,7,8,9\}, \\ \{0,4,5,6,7,8,9\}\}$. Then  $ (S,\mathcal{X})$ defines a hypergraph. Notice that this example hypergraph does not include all the edges of the graph defined above. This is simply because drawing all the binary hyperedges resulted in the visualisation that is too crowded with blue blobs. Figure~\ref{fig:hypergraph} is a visual representation of this hypergraph. 
\end{example}

\begin{figure}[htbp]
\centering
\includegraphics[width = 0.8\hsize]{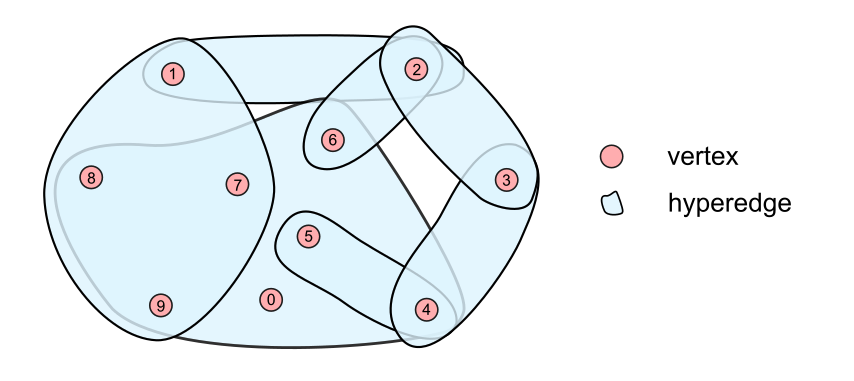}
\caption{A visual representation of an example graph in Example~\ref{hypergraph:example}.}
\label{fig:hypergraph}
\end{figure}

\begin{tcolorbox}
[width=\linewidth, sharp corners=all, colback=white!95!black]
\begin{definition}[Simplex]
\label{simplex:main}
Let $S$ be a nonempty set. A \textbf{simplex} on $S$ is a non-empty subset of $S$. The dimension of a simplex is one less than the cardinality of the subset. Specifically, if $\sigma \subseteq S$ and $\sigma \neq \emptyset$, then the dimension of $\sigma$, denoted as $\dim(\sigma)$, is given by $\dim(\sigma) = |\sigma| - 1$, where $|\sigma|$ represents the cardinality of $\sigma$.
\end{definition}
\end{tcolorbox}

\begin{tcolorbox}
[width=\linewidth, sharp corners=all, colback=white!95!black]
\begin{definition}[\href{https://app.vectary.com/p/4HZRioKH7lZ2jWESIBrjhf}{Simplicial complex}]
\label{SCs:main}

An \textbf{abstract simplicial complex}
on a nonempty set $S$ is a pair $(S,\mathcal{X})$, where $\mathcal{X}$ is a subset of $\mathcal{P}(S) \setminus \{\emptyset\}$ such that $ x \in \CCX$ and $y \subseteq x $ imply $y \in \CCX$. 
\end{definition}
\end{tcolorbox}
    Simplicial complexes also generalise graphs. To see this, one can imagine a simplicial complex with only 1-D and 2-D simplices, which are vertices and edges. However, they do not generalise hypergraphs as they come with the additional constraint. Although not immediately obvious, the constraint ``$ x \in \CCX$ and $y \subseteq x $ imply $y \in \CCX$" ensures that every element in $\CCX$ is a simplex. To see this, notice that $y$ in the above definition refers to an arbitrary subset of $x$ regardless of cardinality. For instance, if $x = \{ a, b, c, d\}$, ie. a tetrahedron, we must have all faces of the tetrahedron $\{ a, b, c\}$, $\{ a, b, d\}$, $\{ a, c, d\}$, and $\{b, c, d\}$ $\in$ $\CCX$. This constraint allows a sense of ``dimension'' or ``hierarchy'' to be contained in each simplex by (cardinality of the simplex) - $1$. In a hypergraph, the cardinality does not imply any sense of dimension or hierarchy.

\begin{example}
\label{simplex:example}
let $S$ and $E$ be the same as in Example~\ref{graph:example}, and let $\CCX = S \cup E \cup \{\{ 1,2,6\}\}$, then $(S,\CCX)$ defines a simplicial complex. Figure~\ref{fig:simplex} is a visual representation of this simplicial complex. For an interactive example, click on \textbf{(Simplicial Complex)} in the definition.

\end{example}

\begin{figure}[htbp]
\centering
\includegraphics[width = 0.8\hsize]{./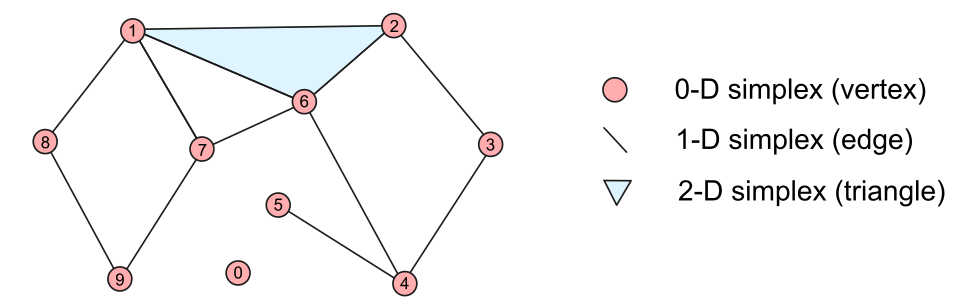}
\caption{A visual representation of the simplicial complex in Example~\ref{simplex:example}.}
\label{fig:simplex}
\end{figure}

\begin{tcolorbox}
[width=\linewidth, sharp corners=all, colback=white!95!black]
\begin{definition}[\href{https://app.vectary.com/p/3EBiRiJcYjFNvkbbWszQ0Z}{Cell Complex}]
\label{RCC:main}
A \textbf{cell/cellular complex} is a topological space $S$ with a partition into subspaces (\textbf{cells}) $\{x_\alpha\}_{\alpha \in P_{S} }$, where $P_{S}$ is an index set, satisfying the following conditions:
\begin{enumerate}
\item $S= \cup_{\alpha \in P_{S}} \Int(x_{\alpha})$,
where $\Int(x)$ denotes the interior of cell $x$.
\item For each $\alpha \in P_S$,
there exists a homeomorphism\footnotemark, called an \textbf{attaching map}, 
from $x_\alpha$ to $\R^{n_\alpha}$ for some $n_\alpha\in \N$,
called the \textbf{dimension} $n_\alpha$ of cell $x_\alpha$.
\item For each cell $x_{\alpha}$,
the boundary $\partial x_{\alpha}$ is a union of finitely many cells,
each having dimension less than the dimension of $x_{\alpha}$. 
\end{enumerate}
\end{definition}
\end{tcolorbox}

\footnotetext{We will not mathematically define ``homeomorphism'' in this report as we won't need it. However, one can intuitively think of a ``homeomorphism'' as a ``stretchy, bendy" transformation between two spaces that allows reshaping one into the other without tearing, cutting, or glueing. For instance, one can stretch and bend a sphere without creating any holes to form a cube. Thus, a cube and a sphere are homeomorphic to each other. However, a sphere and a torus (the shape of a donut) are not homeomorphic as one would have to create a hole whilst stretching to form the torus.}

\begin{remark}
    In standard literature, the above definition is not a ``cell complex''  but a ``regular cell complex''.
\end{remark} 

Cell complexes generalise graphs and simplicial complexes. You may click on the link on \href{https://app.vectary.com/p/3EBiRiJcYjFNvkbbWszQ0Z}{\textbf{Cell Complex}} to view an interactive example of a cell complex. We provide another example below as Example~\ref{cell:example}. Intuitively, a cell complex is a disjoint union of cells, with each of these cells being homeomorphic\footnotemark\textbf{} to the interior of a $k$ dimensional ball in $\R^k$. These cells are hierarchically attached to form higher-order cells. Thus, all the information regarding all the attachments of a cell complex can be stored in a sequence of incidence matrices defined in Definition~\ref {def:inc_mat}. The following are boundary examples:
The boundary of a 2-cell (a rectangle, for example) consists of three 1-cells (its edges).
The boundary of a 1-cell (an edge) consists of two 0-cells (its endpoints). Cell complexes generalise simplicial complexes, as they lift the condition that the cells must be simplices. However, like simplicial complexes, cell complexes do not generalise hypergraphs, because of the boundary condition. For instance, faces of a 2-Cell (say, a pentagon) cannot solely be a combination of 0-Cells, as condition 3 requires that the boundary be a union of \textbf{finitely} many cells. For an interactive example, click on \textbf{(Cell Complex)} in the definition.

\begin{example}
\label{cell:example}
let $S$ and $E$ be the same as in Example~\ref{graph:example}, and let $\{x_\alpha\}_{\alpha \in P_{S} } = S \cup E \cup \{\{ 1,2,6\}, \\ \{1,7,8,9\}\}$, then $(S,\CCX)$ defines a cellular complex. Figure~\ref{fig:cell} is a visual representation of this cellular complex. 
\end{example}

\begin{figure}[htbp]
\centering
\includegraphics[width = 0.8\hsize]{./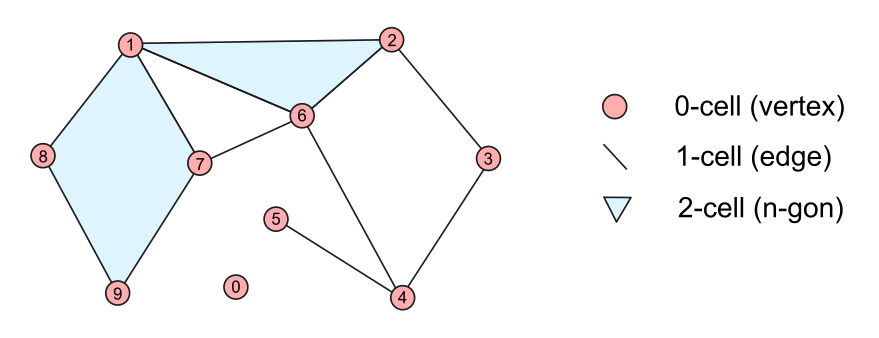}
\caption{A visual representation of the cellular complex in Example~\ref{cell:example}.}
\label{fig:cell}
\end{figure}

\section{Combinatorial Complexes}
\label{ccs}
\subsection{Definition}

\begin{tcolorbox}
[width=\linewidth, sharp corners=all, colback=white!95!black]
\begin{definition}[Rank function]\label{def:rank}
A \textbf{rank function} on a higher-order domain $\mathcal{X}$ is an order-preserving function
$\rk\colon \CCX\to \Znon$; i.e., $x\subseteq  y$ implies $\rk(x) \leq \rk(y)$ for all $x,y\in\CCX$. 
\end{definition}
\end{tcolorbox}

\begin{tcolorbox}
[width=\linewidth, sharp corners=all, colback=white!95!black]
\begin{definition}[CC]
\label{def:cc} A \textbf{combinatorial complex (CC)} is a triple $(S, \CCX, \rk)$ consisting of a set $S$, a subset $\CCX$ of $\mathcal{P}(S)\setminus\{\emptyset\}$, and a function
$\rk \colon \CCX\to \Znon$ with the following properties:
	 \begin{enumerate}
	 \item for all $s\in S$, $\{s\}\in\CCX$, and
	 \item the function $\rk$ is order-preserving,
  which means that if $x,y\in \CCX$ satisfy $x\subseteq  y$, then $\rk(x) \leq \rk(y)$.
	 \end{enumerate}
  Elements of $S$ are called \textbf{entities} or \textbf{vertices}, elements of $\CCX$ are called \textbf{relations} or \textbf{cells}, and $\rk$ is called the \textbf{rank function} of the CC.
\end{definition}
\end{tcolorbox}

The rank of a cell $x\in\mathcal{X}$ is
the value $\rk(x)$ of the rank function $\rk$ at $x$.
The \emph{dimension} $\dim(\CCX)$ of a CC $\CCX$
is the maximal rank among its cells. A cell of rank $k$ is called a \emph{$k$-cell} and is denoted by $x^k$. The \textit{$k$-skeleton} of a CC $\CCX$, denoted $\CCX^{(k)}$, is the set of cells of rank at most $k$ in $\CCX$. The set of cells of rank exactly $k$ is denoted by $\mathcal{X}^k$. Note that this set corresponds to $\CCX^k=\rk^{-1}(\{k\})$. The $1$-cells are called the \emph{edges} of $\CCX$.
In general, an edge of a CC may contain more than two nodes. CCs whose edges have exactly two nodes are called \emph{graph-based} CCs. In this report, we only work with graph-based CCs.

    Combinatorial complexes generalise over graphs, hypergraphs, simplicial complexes, and cell complexes. Combinatorial complexes are similar to hypergraphs in the sense that a cell can have any cardinality, and similar to cell complexes in the sense that they exhibit a hierarchical structure explicitly defined by the rank function. Below is an example to show a CC which cannot be expressed as either of the aforementioned objects. 

\begin{example}
\label{cc:example}
let $S$ and $E$ be the same as in Example~\ref{graph:example}, and let $\CCX = E \cup \{\{ 1,2,6\}, \{1,7,8,9\}, \\ \{0, 4, 5, 6, 7, 8, 9\}\}$. Define $\rk$ by $\rk(\{ 1,2,6\}) = \rk(\{1,7,8,9\}) = 2$, $\rk(\{0, 4, 5, 6, 7, 8, 9\}) = 3$; $\forall x \in E$, $\rk(x) = 1$; $\forall x \in S$, $\rk(x) = 0$. Then $(S,\CCX, \rk)$ defines a combinatorial complex. Figure~\ref{fig:cc} is a visual representation of this combinatorial complex. Notice that the 3-cell can not be represented by a cell complex cell as it contains a 0-cell that is not contained by any 1-cell.
\end{example}

\begin{figure}[htbp]
\centering
\includegraphics[width = 0.8\hsize]{./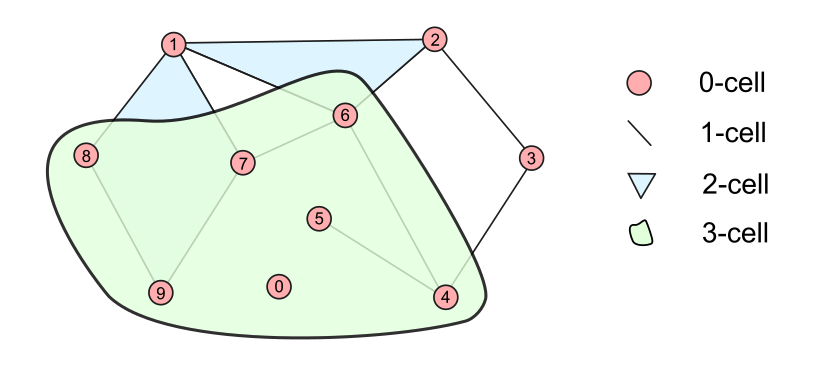}
\caption{A visual representation of the cellular complex in Example~\ref{cc:example}.}
\label{fig:cc}
\end{figure}

\subsection{Combinatorial Complex Neighbourhood}

\begin{tcolorbox}
[width=\linewidth, sharp corners=all, colback=white!95!black]
\begin{definition}[Sub-CC]
\label{sub-cc}
Let $(S,\CCX, \rk)$ be a CC.
A \textbf{sub-combinatorial complex (sub-CC)} of a CC $(S,\CCX, \rk)$ is a CC $(A,\mathcal{Y},\rk^{\prime})$ such that $A\subseteq S$, $\mathcal{Y}\subseteq\mathcal{X}$ and
$\rk^{\prime} = \rk|_{\mathcal{Y}}$ is the restriction of $\rk$ on $\mathcal{Y}$.
\end{definition}
\end{tcolorbox}

\begin{tcolorbox}
[width=\linewidth, sharp corners=all, colback=white!95!black]
\begin{definition}[CC-neighbourhood function]
\label{neighbourhood_structure}
A \textbf{CC-neighbourhood function} on a CC $(S,\CCX, \rk)$
is a function $\mathcal{N}$ that assigns to every sub-CC
$(A,\mathcal{Y},\rk^{\prime})$ of the CC
a nonempty collection $\mathcal{N}(\mathcal{Y})$ of subsets of $S$.
\end{definition}
\end{tcolorbox}

A neighbourhood function denotes some sense of vicinity of a given cell. How this neighbourhood can be defined is very general and context specific. Intuitively, in this work, we define neighbourhood functions in a sense of connectedness. Since we work with graph based CCs, one example of a neighbourhood of a node could be the set of the node itself along with the other nodes connected to this node via edges. We will define slightly more complex neighbourhood functions via adjacency and incidence neighbourhood functions with Definitions~\ref{filtered_incidence}, and \ref{def:k_coa_nf}.

\begin{tcolorbox}
[width=\linewidth, sharp corners=all, colback=white!95!black]
\begin{definition}[Neighbourhood matrix]
\label{neighbourhood_matrix}
Let $\mathcal{N}$ be a neighbourhood function defined on a CC $\CCX$.
Moreover, let $\mathcal{Y}=\{y_1,\ldots,y_n\}$ and $\mathcal{Z}=\{z_1,\ldots,z_m\}$
be two collections of cells in $\CCX$ such that $\mathcal{N}(y_{i}) \subseteq \CCZ$
for all $1\leq i \leq n $.
The \textbf{neighbourhood matrix
of $\mathcal{N}$ with respect to $\mathcal{Y}$ and $\mathcal{Z}$} is the
$|\CCZ| \times |\CCY|$ binary matrix $G$ whose 
$(i,j)$-th entry $[G]_{ij}$ has value $1$ if $z_i \in \mathcal{N}(y_j)$
and $0$ otherwise.
\end{definition}
\end{tcolorbox}

In practice and throughout this report,  $\CCY$ and $\CCZ$ will be $\CCX^k$ and $\CCX^l$ for some ranks $k,l$ of a $\CC$.

\begin{tcolorbox}
[width=\linewidth, sharp corners=all, colback=white!95!black]
\begin{definition}[$k$-down/up incidence neighbourhood functions]
\label{filtered_incidence}
Let $(S,\CCX, \rk)$ be a CC.
For any $k\in\N$,
the \textbf{$k$-down incidence neighbourhood function}
$\mathcal{N}_{\searrow,k}(x)$ of a cell $x\in\mathcal{X}$
is defined to be the set
$\{ y\in \CCX \mid y \subsetneq x, \rk(y)=\rk(x)-k \}$.
The \textbf{$k$-up incidence neighbourhood function}
$\mathcal{N}_{\nearrow,k}(x)$ of $x$
is defined to be the set
$\{ y\in \CCX \mid x \subsetneq y, \rk(y)=\rk(x)+k \}$.
\end{definition}
\end{tcolorbox}

Intuitively, k-down incidence neighbourhood of a cell $x$ is the set of all the cells contained in $x$ with a specified rank ($\rk(x)-k$). k-up incidence neighbourhood of a cell $x$ is the set of all the cells that contain $x$ with a specified rank ($\rk(x)+k$).

\begin{tcolorbox}
[width=\linewidth, sharp corners=all, colback=white!95!black]
\begin{definition}[(co)-incidence matrix]
\label{def:inc_mat}
Let $(S,\CCX, \rk)$ be a CC.
For any $r,k \in \Znon$ with $0\leq r<k \leq \dim(\CCX) $, the
\textbf{$(r,k)$-incidence matrix} 
$B_{r,k}$ between $\CCX^{r}$ and $\CCX^{k}$
is defined to be the $ |\CCX^r| \times |\CCX^k|$ binary matrix
whose $(i, j)$-th entry $[B_{r,k}]_{ij}$
equals one if $x^r_i$ is incident to $x^k_j$
and zero otherwise. In this work we refer to $coB_{r,k} = B_{r,k}^{\top}$ as the
\textbf{$(r,k)$-co-incidence matrix} between $\CCX^{r}$ and $\CCX^{k}$.
\end{definition}
\end{tcolorbox}

\begin{remark}
    Co-incidence matrices are not originally defined in \cite{hajij2023topological}.
\end{remark}

Co-incidence matrices are very intuitive from a data science perspective. One can think of every cell of rank $k$ in a CC as a row of this matrix encoded by which $r$ cells they contain. When $r = 0$, it is straightforward to see that there is a bijection between $\CCX^k$ and the rows of $coB_{r,k}$. This is similar to how we usually represent a dataset as a matrix where every row represents a data point. Co-incidence matrices will be crucial for our model presented in this report. We will elaborate further in Section~\ref{representation}.

\begin{tcolorbox}
[width=\linewidth, sharp corners=all, colback=white!95!black]
\begin{definition}[$k$-(co)adjacency neighbourhood functions]
\label{def:k_coa_nf}
Let $(S, \CCX, \rk)$ be a CC.
For any $k\in\N$,
The \textbf{$k$-adjacency neighbourhood function} $\mathcal{N}_{a,k}(x)$

of a cell $x \in \CCX$ is defined to be the set
\begin{equation*}
\{ y \in \CCX \mid \rk(y)=\rk(x),
\exists z \in \CCX
\text{ with } \rk(z)=\rk(x)+k \text{ such that } x,y\subsetneq z \}.
\end{equation*}
The \textbf{$k$-coadjacency neighbourhood function}
$\mathcal{N}_{co,k}(x)$ of $x$ is defined to be the set
\begin{equation*}
 \{ y \in \CCX \mid \rk(y)=\rk(x),
\exists z \in \CCX
\text{ with } \rk(z)=\rk(x)-k 
\text{ such that } z\subsetneq y\text{ and }z\subsetneq x \}.
\end{equation*}
\end{definition}
\end{tcolorbox}

In words, k-adjacency neighbourhood of a cell $x$ is the set of all the cells with rank same as $x$, and contained in cells with a specified rank ($\rk(x)+k$) which also contain $x$. In a graph-based CC, this may be all the nodes a node is connected to, including the node itself. K-coadjacency neighbourhood of a cell $x$ is the set of all the cells with rank same as $x$, and contain cells with a specified rank ($\rk(x)-k$) which $x$ also contains. In a graph-based CC, this may be all the edges which share a node with a given edge, including the edge itself.

\begin{tcolorbox}
[width=\linewidth, sharp corners=all, colback=white!95!black]
\begin{definition}[Adjacency matrix]
\label{def:adj}
For any $r\in\Znon$ and $k\in \mathbb{Z}_{>0} $ with $0\leq r<r+k \leq \dim(\CCX) $,
the \textbf{$(r,k)$-adjacency matrix} $A_{r,k}$
among the cells of $\CCX^{r}$
with respect to the cells of $\CCX^{k}$ 
is defined to be the $ |\CCX^r| \times |\CCX^r|$ binary matrix
whose $(i, j)$-th entry $[A_{r,k}]_{ij}$ equals one
if $x^r_i$ is $k$-adjacent to $x^r_j$ and zero otherwise.
\end{definition}
\end{tcolorbox}

An adjacency matrix is by far the most common approach to represent a graph, since it encodes all the information in a graph (potentially directed with self loops) without node or edge features in a single square matrix. This is the approach taken in \cite{clarkson_damnets_2023}, \cite{dai_scalable_2020}, \cite{inbook}.

\begin{tcolorbox}
[width=\linewidth, sharp corners=all, colback=white!95!black]
\begin{definition}[Coadjacency matrix]
\label{def:coadj}
For any $r\in \Znon$ and $k\in\N$ with $0\leq r-k<r \leq \dim(\CCX) $, the \textbf{$(r,k)$-coadjacency matrix} $coA_{r,k}$
among the cells of $\CCX^{r}$
with respect to the cells of $\CCX^{k}$ 
is defined to be the $ |\CCX^r| \times |\CCX^r|$ binary matrix
whose $(i, j)$-th entry $[coA_{r,k}]_{ij}$ has value $1$
if $x^r_i$ is $k$-coadjacent to $x^r_j$ and $0$ otherwise.
\end{definition}
\end{tcolorbox}

\begin{remark}
    Unlike the case with incidence matrices, $A_{r,k}^{\top} \neq coA_{r,k}$.
\end{remark}

\subsection{Data Defined on Combinatorial Complexes}

\begin{tcolorbox}
[width=\linewidth, sharp corners=all, colback=white!95!black]
\begin{definition}[k-cochain spaces]
Let $\mathcal{C}^k(\CCX,\R^d )$
be the $\R$-vector space of functions
$\mathbf{H}_k\colon\CCX^k\to \R^d$ for a rank $k \in \Znon$ and dimension $d$.
$d$ is called the \textbf{data dimension}.
$\mathcal{C}^k(\CCX,\R^d)$ is called the \textbf{$k$-cochain space}.
Elements $\mathbf{H}_k$ in $\mathcal{C}^k(\CCX,\R^d)$
are called \textbf{$k$-cochains} or \textbf{$k$-signals}. 
\end{definition}
\end{tcolorbox}

K-cochain spaces are used to define signals on cells of a CC. Intuitively, one can think of $\mathbf{H}_k$ as the features (or signals) defined on the $k$-cells of a CC. For instance, given a graph based CC, to define the features of nodes, we could use the $0$-cochain $\mathbf{H}_0$ to map every node to a vector of length $d$. Given a node ordering, we can explicitly represent $\mathbf{H}_0$ by a feature matrix $\mathbf{F_0} \in \R^{|\CCX^0| \times d}$, where $i$'th row is the horizontal feature vector $\mathbf{h}_{x_{i}^{0}}$ corresponding to the features defined on node $x_{i}^{0}$. For the rest of this report, unless stated otherwise, $\mathbf{H}_k = \mathbf{F_k} \in \R^{|\CCX^0| \times d}$.

\begin{tcolorbox}
[width=\linewidth, sharp corners=all, colback=white!95!black]
\begin{definition}[Cochain maps]
For $r< k$, an incidence matrix $B_{r,k}$ induces a map
\begin{align*}
    B_{r,k}\colon \mathcal{C}^k(\CCX) &\to   \mathcal{C}^r(\CCX),\\
    \mathbf{H}_k &\to  B_{r,k}(\mathbf{H}_k),
\end{align*}
where $B_{r,k}(\mathbf{H}_k)$ denotes the usual product $B_{r,k} \mathbf{H}_k$
of matrix $B_{r,k}$ with matrix $\mathbf{H}_k$.
Similarly, an $(r,k)$-adjacency matrix $A_{r,k}$ induces a map
\begin{align*}
    A_{r,k}\colon \mathcal{C}^r(\CCX) &\to   \mathcal{C}^r(\CCX),\\
    \mathbf{H}_r &\to  A_{r,k}(\mathbf{H}_r).
\end{align*}
\end{definition}
These two types of maps between cochain spaces are called \textbf{cochain maps}.
\end{tcolorbox}

Cochain maps are the main building blocks of higher-order message passing in Definition~\ref{homp-definition}. They are used to redistribute the signal defined on $k$-cells to signals defined on $r$-cells, which one can intuitively think of passing messages across the cells of a CC.

\subsection{Construction of higher-order networks}

Below we define the general form of \textbf{Combinatorial Complex Neural Networks} (CCNNs), of which \textbf{Combinatorial Complex Attention Neural Networks} (CCANNs) are a special case of. Note that the below definition is very general and does not give any information regarding the computations of these networks.

\begin{tcolorbox}
[width=\linewidth, sharp corners=all, colback=white!95!black]
\begin{definition}[CCNN]
\label{hoans_definition}
Let $\CCX$ be a CC.
Let $\mathcal{C}^{i_1}\times\mathcal{C}^{i_2}\times \ldots \times  \mathcal{C}^{i_m}$ and $\mathcal{C}^{j_1}\times\mathcal{C}^{j_2}\times \ldots \times  \mathcal{C}^{j_n}$ be a Cartesian product of $m$ and $n$ cochain spaces defined on $\CCX$. A \textbf{combinatorial complex neural network (CCNN)} 
is a function of the form
\begin{equation*} 
\CCN: \mathcal{C}^{i_1}\times\mathcal{C}^{i_2}\times \ldots \times  \mathcal{C}^{i_m} \longrightarrow \mathcal{C}^{j_1}\times\mathcal{C}^{j_2}\times \ldots \times \mathcal{C}^{j_n}.
\end{equation*}
\end{definition}
\end{tcolorbox}

Below is the most important definition in this chapter:

\begin{tcolorbox}
[width=\linewidth, sharp corners=all, colback=white!95!black]
\begin{definition}[Cochain push-forward]
\label{pushing_exact_definition}
Consider a CC $\CCX$,
a cochain map $G\colon\mathcal{C}^i(\CCX)\to \mathcal{C}^j(\CCX)$,
and a cochain $\mathbf{H}_i$ in $\mathcal{C}^i(\CCX)$.
A \textbf{(cochain) push-forward} induced by $G$ is an operator
$\mathcal{F}_G \colon \mathcal{C}^i(\CCX)\to \mathcal{C}^j(\CCX)$ 
defined via
    \begin{equation}
    \mathcal{F}_G(\mathbf{H}_i) =  \mathbf{K}_j=[ \mathbf{k}_{y^j_1},\ldots,\mathbf{k}_{y^j_{|\CCX^j|} }], 
\end{equation}
such that for $k=1,\ldots,|\CCX^j|$,
    \begin{equation}
    \label{functional}
    \mathbf{k}_{y_k^j}= \bigoplus_{x_l^i \in \mathcal{N}_{G^T}(y_k^j)} \alpha_{G} ( \mathbf{ \mathbf{h}}_{x_l^i}  ),
    \end{equation}
where $\bigoplus$ is a permutation-invariant aggregation function and $\alpha_G$ is a differentiable function. 
\end{definition}
\end{tcolorbox}

Let us unpack Definition~\ref{pushing_exact_definition} using an example: 

\begin{example}
\label{example:push-forward}
Let us take the simple case of the graph based CC in Example~\ref{cc:example}. Let the cochain map be $B_{0,1}$.
In this scenario, we would like to take the cochains defined on the $1$-cells, and given some rule, assign some cochains to every $0$-cell in our CC. For simplicity, let $H_1 = \mathbf{I}$ be the features defined on the edges, where edge indexed by alphabetical order, i.e. every edge is one-hot encoded and indexed as follows:

Edge 1: \{1,2\}, Edge 2: \{1,6\}, Edge 3: \{1,7\}, Edge 4: \{1,8\},
Edge 5: \{2,3\}, Edge 6: \{2,6\}, Edge 7: \{3,4\}, Edge 8: \{4,5\},
Edge 9: \{6,7\}, Edge 10: \{7,8\}, Edge 11: \{8,9\}.

Let $\alpha_{B_{0,1}}(\mathbf{h}) = \mathbf{h}$. Let $\bigoplus$ simply be the operation of addition. Under this formulation, let us focus on node $3$: $y^{0}_3$. $y^{0}_3$ has only two neighbours in the set $\mathcal{N}_{B_{0,1}(y^{0}_3)} = \{\{2,3\}, \{3,4\}\}$ with features of the edges according to the feature matrix \( F = H_1\) as follows:

\begin{itemize}
    \item \textbf{Edge 5}: \( F[5, :] = \begin{pmatrix} \begin{array}{ccccccccccc} 0 & 0 & 0 & 0 & 1 & 0 & 0 & 0 & 0 & 0 & 0 \end{array} \end{pmatrix} \)
    
    \item \textbf{Edge 7}: \( F[7, :] = \begin{pmatrix} \begin{array}{ccccccccccc} 0 & 0 & 0 & 0 & 0 & 0 & 1 & 0 & 0 & 0 & 0 \end{array} \end{pmatrix} \)
\end{itemize}

Putting it all together, we have:

\begin{equation}
\mathbf{k}_{y_3^0} = \bigoplus_{x_l^1 \in \mathcal{N}_{B_{0,1}(y^{0}_3)}} \alpha_{B_{0,1}} ( \mathbf{ \mathbf{h}}_{x_l^1}  ) =  F[5, :] + F[7, :] 
= \begin{pmatrix} \begin{array}{ccccccccccc} 0 & 0 & 0 & 0 & 1 & 0 & 1 & 0 & 0 & 0 & 0 \end{array} \end{pmatrix}.
\end{equation}
We repeat this for every node in the CC and we have successfully defined $\mathcal{F}_{B_{0,1}}$. Note that the above verbose procedure is equivalent to $\mathcal{F}_{B_{0,1}}(\mathbf{H_1}) = B_{0,1}\mathbf{H_1}$.

\end{example}


\begin{tcolorbox}
[width=\linewidth, sharp corners=all, colback=white!95!black]
\begin{definition}[Merge node]
\label{exact_definition_merge_node}
Let $\CCX$ be a CC.
Moreover, let $G_1\colon\mathcal{C}^{i_1}(\mathcal{X})\to\mathcal{C}^j(\mathcal{X})$ and
$G_2\colon\mathcal{C}^{i_2}(\mathcal{X})\to\mathcal{C}^j(\mathcal{X})$ be two cochain maps.
Given a cochain vector $(\mathbf{H}_{i_1},\mathbf{H}_{i_2}) \in \mathcal{C}^{i_1}\times \mathcal{C}^{i_2}$, a \textbf{merge node} $\mathcal{M}_{G_1,G_2}\colon\mathcal{C}^{i_1} \times \mathcal{C}^{i_2} \to \mathcal{C}^j$ is defined as
\begin{equation}
    \mathcal{M}_{G_1,G_2}(\mathbf{H}_{i_1},\mathbf{H}_{i_2})= \beta\left( \mathcal{F}_{G_1}(\mathbf{H}_{i_1})  \bigotimes \mathcal{F}_{G_2}(\mathbf{H}_{i_2}) \right),
\end{equation}
where $\bigotimes \colon \mathcal{C}^j \times \mathcal{C}^j \to \mathcal{C}^j $ is an aggregation function, $\mathcal{F}_{G_1}$ and $\mathcal{F}_{G_2}$ are push-forward operators induced by $G_1$ and $G_2$, and $\beta$ is an activation function.
\end{definition}
\end{tcolorbox}

\begin{tcolorbox}
[width=\linewidth, sharp corners=all, colback=white!95!black]
\begin{definition} [Split node]
 \label{exact_definition_split_node}
Let $\CCX$ be a CC.
Moreover, let $G_1\colon\mathcal{C}^{j}(\mathcal{X})\to\mathcal{C}^{i_1}(\mathcal{X})$
and $G_2\colon\mathcal{C}^{j}(\mathcal{X})\to\mathcal{C}^{i_2}(\mathcal{X})$ be two cochain maps. Given a cochain $\mathbf{H}_{j} \in \mathcal{C}^{j}$, a \textbf{split node} $\mathcal{S}_{G_1,G_2}\colon\mathcal{C}^j \to \mathcal{C}^{i_1} \times \mathcal{C}^{i_2} $ is defined as
\begin{equation}
    \mathcal{S}_{G_1,G_2}(\mathbf{H}_{j})= \left(  \beta_1(\mathcal{F}_{G_1}(\mathbf{H}_{j})) , \beta_2(\mathcal{F}_{G_2}(\mathbf{H}_{j})) \right),
\end{equation}
where  $\mathcal{F}_{G_i}$ is a push-forward operator induced by $G_i$,
and $\beta_i$ is an activation function for $i=1, 2$.
\end{definition}
\end{tcolorbox}

Split node is simply a tuple of cochain push-forward operations. However, it is useful in helping us think intuitively in forming CCNNs. Push forward, merge node, and split node operations are collectively referred to as \textbf{elementary tensor operations}, as we can build any CCNNs by a combination of these operations.
\begin{tcolorbox}
[width=\linewidth, sharp corners=all, colback=white!95!black]
\begin{definition}[Higher-order attention]
\label{hoa}
Let $\CCX$ be a CC, $\mathcal{N}$
a neighbourhood function defined on $\CCX$, and $\CCY_0$ a sub-CC of $\CCX$.
Let $\mathcal{N}(\mathcal{Y}_0)=
\{ \CCY_1,\ldots, \CCY_{|\mathcal{N}(\CCY_0)|} \}$ be a set
of sub-CCs that are in the vicinity of $\mathcal{Y}_0$ with respect to the neighbourhood function $\mathcal{N}$.
A \textbf{higher-order attention}
of $\CCY_0$ with respect to $\mathcal{N}$
is a function
$a\colon {\CCY_0}\times \mathcal{N}(\CCY_0)\to [0,1]$
that assigns a weight $a(\CCY_0, \CCY_i)$
to each element $\CCY_i\in\mathcal{N}(\CCY_0)$
such that $\sum_{i=1}^{| \mathcal{N}(\CCY_0)|} a(\CCY_0,\CCY_i)=1$.     
\end{definition}
\end{tcolorbox}

Intuitively, higher-order attention $a$ decides how much weight to put to a neighbour of a given cell whilst computing the elementary tensor operations. In the following definition, we incorporate this idea into the push-forward operation.

\begin{tcolorbox}
[width=\linewidth, sharp corners=all, colback=white!95!black]
\begin{definition}[CC-attention push-forward for cells of equal rank]
	\label{hoan_sym}
	Let $G\colon C^{s}(\CCX)\to C^{s}(\CCX)$ be a neighbourhood matrix. A \textbf{CC-attention push-forward}
	induced by $G$ is a cochain map $\mathcal{F}^{att}_{G}\colon C^{s}(\CCX, \mathbb{R}^{d_{s_{in}}}) \to C^{s}(\CCX,\mathbb{R}^{d_{s_{out}}}) $ defined as
	\begin{equation}
	\label{attention1}
	\mathbf{H}_s \to \mathbf{K}_{s} = 
	(G \odot att)  \mathbf{H}_{s}  W_{s} ,
	\end{equation}
 where $\odot$ is the Hadamard product, $W_{s}  \in \mathbb{R}^{d_{s_{in}}\times d_{s_{out}}} $ are trainable
	parameters, and $att\colon C^{s}(\CCX)\to C^{s}(\CCX) $ is a \textbf{higher-order attention matrix} that has the same dimension as matrix $G$. The $(i,j)$-th entry of matrix $att$ is defined as
	\begin{equation}
	att(i,j) =  \frac{e_{ij}}{ \sum_{k \in \mathcal{N}_{G}(i)} e_{ik}  },    
	\end{equation}
	where $e_{ij}= \phi(a^T [W_{s} \mathbf{H}_{s,i}||W_{s} \mathbf{H}_{s,j} ] )$,
	$a \in \mathbb{R}^{2 \times s_{out}} $ is a trainable vector,  $[a ||b ]$ denotes the concatenation of $a$ and $b$, $\phi$ is an activation function, and $\mathcal{N}_{G}(i)$ is the neighbourhood of cell $i$ with respect to matrix $G$. 
\end{definition}
\end{tcolorbox}

$(G \odot att)$ in equation~\ref{attention1} defines probability a discrete distribution along each column: Namely, of a given feature, how much weight one is aught to give to each neighbour. $W_s$ is a learnable linear transformation applied to $\mathbf{H_s}$, allowing us to control the number of features for the target cochain space, explained in the original Graph Attenion Networks (GATs)\cite{velickovic_graph_2018}: ``In order to obtain sufficient expressive power to transform the input features into higher-level features, at least one learnable linear transformation is required''. Note the similarity of the definition above to the one in \cite{velickovic_graph_2018}. $a^T$ is the attention mechanism which ultimately decides on the weights, given the transformed feature vector of the cell that we compute the attentions from ($W_{s} \mathbf{H}_{s,i}$), and the transformed feature vector of the cell that we compute the attentions to ($W_{s} \mathbf{H}_{s,j}$). The following definition generalises Definition~\ref{attention1} to cells of different ranks:

\begin{tcolorbox}
[width=\linewidth, sharp corners=all, colback=white!95!black]
\begin{definition}[CC-attention push-forward for cells of unequal ranks]
\label{hoan_asym}
For $s\neq t$,
let $G\colon C^{s}(\CCX)\to C^{t}(\CCX)$ be a neighbourhood matrix. A \textbf{CC-attention block} induced by $G$ is a cochain map $\mathcal{F}_{G}^{att}  {\mathcal{A}}\colon C^{s}(\CCX,\mathbb{R}^{d_{s_{in}}}) \times C^{t}(\CCX,\mathbb{R}^{d_{t_{in}}}) \to C^{t}(\CCX,\mathbb{R}^{d_{t_{out}}}) \times C^{s}(\CCX,\mathbb{R}^{d_{s_{out}}}) $ defined as
	\begin{equation}
	(\mathbf{H}_{s},\mathbf{H}_{t}) \to  (\mathbf{K}_{t}, \mathbf{K}_{s} ), 
	\end{equation} 
	with
	\begin{equation}
	\label{attention2}
	\mathbf{K}_{t} =   ( G \odot att_{s\to t})  \mathbf{H}_{s} W_{s} ,\;
        \mathbf{K}_{s} = (G^T \odot att_{t\to s})  \mathbf{H}_{t}  W_{t} , 
	\end{equation}
	where $W_s \in \mathbb{R}^{d_{s_{in}}\times d_{t_{out}}} , W_t \in \mathbb{R}^{d_{t_{in}}\times d_{s_{out}}} $ are
	trainable
	parameters, and $att^{s\to t}\colon C^{s}(\CCX)\to C^{t}(\CCX) , att^{t\to s}\colon C^{t}(\CCX)\to C^{s}(\CCX) $ are \textbf{higher-order attention matrices} that have the same dimensions as matrices $G$ and $G^T$, respectively. The $(i,j)$-th entries of matrices  $att^{s\to t}$ and $att^{t\to s}$ are defined as
	\begin{equation}
\label{eq:ast_ats}
	att^{s\to t}_{ij} =  \frac{e_{ij}}{ \sum_{k \in \mathcal{N}_{G} (i)} e_{ik}  },\;
 att^{t\to s}_{ij} =  \frac{f_{ij}}{ \sum_{k \in \mathcal{N}_{G^T} (i)} f_{ik}  },
	\end{equation} 
	with
	\begin{equation}
 \label{eq:ef}
	e_{ij} = \phi((a)^T [W_s \mathbf{H}_{s,i}||W_t \mathbf{H}_{t,j}] ),\;
	f_{ij} = \phi(rev(a)^T [W_t \mathbf{H}_{t,i}||W_s \mathbf{H}_{s,j}]),
	\end{equation}
	where $a \in \mathbb{R}^{t_{out} + s_{out}} $ is a trainable vector, and
	$rev(a)= [a[t_{out}:] ||  a[:t_{out}]]$
\end{definition}
\end{tcolorbox}

\begin{remark}
    Note that the definition is pretty much the same as Definition~\ref{attention1} until equation~\ref{eq:ef}, where we replace the linearly transformed feature vector of the cell of the same rank with the respective linear transformed cell of different rank. Note that the attention mechanism $a$ is shared between $e_{ij}$ and $f_{ij}$.
\end{remark}
    
Next, define the computational framework that puts together everything in the preliminaries section and formalises the flow of information in the cells of a higher-order network. Note that we are not introducing a novel concept here but bring everything together.

\begin{tcolorbox}
[width=\linewidth, sharp corners=all, colback=white!95!black]
\begin{definition}[Higher-order message passing on a CC]
\label{homp-definition}
Let $\CCX$ be a CC.  Let $\mathcal{N}=\{ \mathcal{N}_1,\ldots,\mathcal{N}_n\}$ be a set of neighbourhood functions defined on  $\CCX$. Let $x$ be a cell and $y\in \mathcal{N}_k(x)$ for some $\mathcal{N}_k \in \mathcal{N}$. A \textbf{message} $m_{x,y}$ between cells $x$ and $y$
is a computation that depends on these two cells
or on the data supported on them.
Denote by $\mathcal{N}(x)$ the multi-set  $\{\!\!\{ \mathcal{N}_1(x) , \ldots ,  \mathcal{N}_n (x) \}\!\!\}$,
and by $\mathbf{h}_x^{(l)}$ some data supported on the cell $x$ at layer $l$. \textbf{Higher-order message passing} on $\CCX$, induced by $\mathcal{N}$, is defined via the following four update rules:
\begin{align}
\label{eqn:homp0}
m_{x,y} &= \alpha_{\mathcal{N}_k}(\mathbf{h}_x^{(l)},\mathbf{h}_y^{(l)}), \\ \label{eqn:homp1}
m_{x}^k &=  \bigoplus_{y \in \mathcal{N}_k(x)}  m_{x,y}, \; 1\leq k \leq n,   \\ \label{eqn:homp2}
m_{x} &=  \bigotimes_{ \mathcal{N}_k \in \mathcal{N} } m_x^k, \\ \label{eqn:homp3}
\mathbf{h}_x^{(l+1)} &= \beta (\mathbf{h}_x^{(l)}, m_x). 
\end{align}
Here, $\bigoplus$ is a permutation-invariant aggregation function
called the \textbf{intra-neighbourhood} of $x$,
$\bigotimes$ is an aggregation function
called the \textbf{inter-neighbourhood} of $x$,
and $\alpha_{\mathcal{N}_k},\beta$ are differentiable functions.
\end{definition}
\end{tcolorbox}

Next, we present a generalised case that incorporates the CC-attention push-forward mechanism introduced in Definitions~\ref{attention1}, \ref{attention2}:

\begin{tcolorbox}
[width=\linewidth, sharp corners=all, colback=white!95!black]
\begin{definition}[Attention higher-order message passing on a CC]
\label{attention_homp}
Let $\CCX$ be a CC.
Let $\mathcal{N}=\{ \mathcal{N}_1,\ldots,\mathcal{N}_n\}$ be a set of neighbourhood functions defined on  $\CCX$.
Let $x$ be a cell and $y\in \mathcal{N}_k(x)$ for some $\mathcal{N}_k \in \mathcal{N}$. A \textbf{message} $m_{x,y}$ between cells $x$ and $y$
is a computation that depends on these two cells, or the data supported on them.
Denote by $\mathcal{N}(x)$ the multi-set  $\{\!\!\{ \mathcal{N}_1(x) , \ldots ,  \mathcal{N}_n (x) \}\!\!\}$,
and by $\mathbf{h}_x^{(l)}$ some data supported on the cell $x$ at layer $l$.
\textbf{Attention higher-order message passing} on $\CCX$, 
induced by $\mathcal{N}$, is defined via the following four update rules:
\begin{align}
\label{ahomp0}
m_{x,y} &= \alpha_{\mathcal{N}_k}(\mathbf{h}_x^{(l)},\mathbf{h}_y^{(l)}), \\ \label{ahomp1}
m_{x}^k &=  \bigoplus_{y \in \mathcal{N}_k(x)} a^k(x,y)  m_{x,y}, \; 1\leq k \leq n ,   \\ \label{ahomp2}
m_{x} &=  \bigotimes_{ \mathcal{N}_k \in \mathcal{N} } b^k m_x^k ,\\ \label{ahomp3}
\mathbf{h}_x^{(l+1)} &= \beta (\mathbf{h}_x^{(l)}, m_x) .
\end{align}
Here, $a^k \colon \{x\} \times \mathcal{N}_k(x)\to [0,1] $ is a higher-order attention function  (Definition~\ref{hoa}),
$b^k$ are trainable attention weights satisfying $\sum_{k=1}^n b^k=1$,
$\bigoplus$ is a permutation-invariant aggregation function, $\bigotimes$ is an aggregation function,  $\alpha_{\mathcal{N}_k}$ and $\beta$ are differentiable functions.
\end{definition}
\end{tcolorbox}

\subsection{CC Representation}
\label{representation}

To incorporate a CC into an implementation of a CCNN, we need to efficiently represent CC in some form. This proves to be trickier than expected. Proposition 8.1 in \cite{hajij_topological_2023} asserts that we can represent a CC $(S, \CCX,\rk)$ via either one of the three:

\begin{enumerate}
\item Incidence matrices $\{B_{k,k+1}\}_{k=0}^{ \dim(\CCX) -1}$
\item Adjacency matrices $\{A_{k,1}\}_{k=0}^{\dim(\CCX)-1}$
\item Coadjacency matrices $\{coA_{k,1}\}_{k=1}^{\dim(\CCX)}$.
\end{enumerate}

This is the approach taken in \cite{carrel_combinatorial_2024}. However, we argue we can not represent a CC like that. To see this, we turn our attention to Example~\ref{cc:example}. Let us denote the CC in Example~\ref{cc:example} by $\CCX_1$. Now let us construct a $\CCX_2$ very similar to $\CCX_1$ by removing the element $5$ from the only $3$-cell. So we have the following $\CCX_2$:

$S$ and $E$ are the same as in Example~\ref{graph:example}, and let $\CCX = E \cup \{\{ 1,2,6\}, \{1,7,8,9\}, \\ \{0, 4, 6, 7, 8, 9\}\}$. Define $\rk$ by $\rk(\{ 1,2,6\}) = \rk(\{1,7,8,9\}) = 2$, $\rk(\{0, 4, 6, 7, 8, 9\}\}) = 3$; $\forall x \in E$, $\rk(x) = 1$; $\forall x \in S$, $\rk(x) = 0$. 

Clearly, $\CCX_1$ and $\CCX_2$ are different CCs, but they have the same $\{B_{k,k+1}\}_{k=0}^{ \dim(\CCX) -1}$, $\{A_{k,1}\}_{k=0}^{\dim(\CCX)-1}$, and $\{coA_{k,1}\}_{k=1}^{\dim(\CCX)}$. In fact, $B_{2,3}$ is just a column of zeros for either case. Thus we do not have an injection, which is necessary for a representation. This problem arises because, unlike a cell complex, cells in a combinatorial complex are not constructed by attaching cells of 1-lower rank.

\begin{proposition}
    Let us call a set of CCs isomorphic to a CC $\CCX$ the ``structure class'' of $\CCX$. The structure class of $\CCX$ is determined by the sequence of (co)-incidence matrices $((co)B_{0,k})_{k=1}^{ \dim(\CCX)}$.
\end{proposition}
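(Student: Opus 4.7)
The plan is to show that the sequence $((coB_{0,k}))_{k=1}^{\dim(\CCX)}$ carries enough information to reconstruct $\CCX$ uniquely up to isomorphism, which is exactly what is meant by saying it determines the structure class.

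The key observation I would exploit is that, by Definition~\ref{def:cc}, every cell $x \in \CCX$ is a nonempty subset of the vertex set $S$. In particular, a cell is \emph{fully} characterised by the set of vertices it contains, so two cells are equal if and only if they contain the same 0-cells. This is the crucial feature that fails for the sequence $\{B_{k,k+1}\}_{k=0}^{\dim(\CCX)-1}$ (as Example~\ref{cc:example} vs.\ its modification shows) but is captured precisely by the $(0,k)$-co-incidence matrices, which record vertex-membership of cells of every rank directly against the ground set $S$.

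First I would describe the reconstruction procedure. The vertex set $S$ is recovered as the common column-index set shared by all matrices in the sequence; the singletons $\{s\}$ must lie in $\CCX$ by axiom (1) of Definition~\ref{def:cc}, giving $\CCX^0$. For each $k \geq 1$, interpret row $j$ of $coB_{0,k}$ as the indicator vector of a subset of $S$; by Definition~\ref{def:inc_mat} this subset is exactly the $j$-th $k$-cell of $\CCX$. So $\CCX^k$ is read off as a multiset of subsets of $S$, and I would then observe that this multiset is in fact a set: two distinct $k$-cells cannot produce identical rows, since equal rows would mean equal vertex contents, forcing the cells to coincide as subsets of $S$. Setting $\CCX = \bigcup_{k=0}^{\dim(\CCX)} \CCX^k$ and defining $\rk(x) = k$ iff $x \in \CCX^k$ then recovers the CC; the rank function is well-defined because a cell $x$ of rank $k$ appears only as a row of $coB_{0,k}$ (by definition of $\CCX^k$) and the rank-preservation axiom in Definition~\ref{def:cc} is automatic from the construction.

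Finally I would address the ``up to isomorphism'' part. The reconstruction depends on an arbitrary labelling of the columns (a bijection with $S$) and an arbitrary ordering of the rows within each matrix. A column relabelling corresponds to a bijection $S \to S'$ that carries cells to cells preserving ranks, i.e.\ to a CC isomorphism; row permutations merely reindex cells of a fixed rank and induce the identity on the underlying CC. Hence any two CCs whose sequences of $(0,k)$-co-incidence matrices agree up to such simultaneous permutations are isomorphic, proving the proposition. The main conceptual step is the uniqueness observation in the previous paragraph; the rest is bookkeeping about orderings. I do not expect any serious obstacle, as the essential work is forced by the fact that cells of a CC are literally subsets of $S$, unlike the abstract attaching-cell construction that complicates the analogous question for cell complexes.
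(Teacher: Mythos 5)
Your proposal is correct and rests on the same key observation as the paper's proof: a cell of a CC is literally a subset of $S$, so it is determined by its $0$-cell membership, which the $(0,k)$-(co)incidence matrices record exactly once per cell. The paper phrases this as an injection from the CC into the sequence of matrices (each cell appearing as a distinct binary column of the appropriate $B_{0,k}$), whereas you run the argument in the inverse direction as an explicit reconstruction and are somewhat more careful about the row/column-permutation bookkeeping behind ``up to isomorphism''; the mathematical content is the same.
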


\begin{proof}
    Given a combinatorial complex (CC) $(S, \CCX, \rk)$, $\CCX$ is simply a subset of  $\mathcal{P}(S)\setminus\{\emptyset\}$ and thus $\forall x \in \CCX, x \in \mathcal{P}(S)$. Then, given an index $i \in \{ n
    \in \mathbb{Z} \mid 1 \leq n \leq |S| \}$ for every element in $S$, we can represent every $x$ as a binary one-hot encoded vector $v_x$ of length $|S|$ such that the $i$'th element is $1$ if the $i$ indexed element in $S$ is contained in $x$, and $0$ otherwise. Thus we have established an injection from $\CCX$ to $\B^{|S|}$, where $\B^{|S|}$ is the set of binary vectors of length $|S|$. Every $v_x$ that represents an element $x \in \CCX$ with rank $\rk(x) = k$ exists exactly once as a column of the $k$'th matrix in the sequence $(B_{0,k})_{k=1}^{ \dim(\CCX)}$, by construction. Thus we have an injection from CC $(S, \CCX, \rk)$ to $(B_{0,k})_{k=1}^{ \dim(\CCX)}$ and thus a representation. Trivially, since $(coB_{0,k})_{k=1}^{ \dim(\CCX)}$ is simply tarnsposes of $(B_{0,k})_{k=1}^{ \dim(\CCX)}$, this representation is valid for $(coB_{0,k})_{k=1}^{ \dim(\CCX)}$ as well.
\end{proof}

\section{Clique Lifting}
\label{lifting}

\begin{tcolorbox}
[width=\linewidth, sharp corners=all, colback=white!95!black]

\begin{definition}[Clique]
\label{def:clique}
A \textbf{clique} in a graph \( G = (V, E) \) is a subset \( C \subseteq V \) of vertices such that every pair of vertices in \( C \) is connected by an edge in \( G \). In other words, the induced subgraph \( G[C] \) is a complete graph.

Formally, \( C \) is a clique if for all \( u, v \in C \) with \( u \neq v \), the edge \( (u, v) \) is in \( E \).
\end{definition}

\end{tcolorbox}

Clique lifting in its standard form takes a graph, finds all the cliques in this graph, and defines a simplicial complex where $0$-simplices are the nodes, $1$-simplices are the edges, $2$-simplices are the triangles (cliques of the original graph of size $3$), $3$-simplices are tetrahedrons (cliques of the original graph of size 4), and so on. We extend this to CCs: We convert this simplicial complex by converting all  $n$-simplices of $n < 2$ to $n$-cells of a combinatorial complex, and $n$-simplices of $n \geq 2$ to $2$-cells of a combinatorial complex.

We first define CC-embedding. CC-embeddings allow us to formalise lifting a lower-order topological structure such as a graph to a higher-order structure such as a CC. We provide one such CC-embedding derived from the clique-lifting method described above. 

\begin{tcolorbox}
[width=\linewidth, sharp corners=all, colback=white!95!black]

\begin{definition}[CC-Embedding]
\label{maps}
 A homomorphism from a CC
 $ (S_1, \mathcal{X}_1, \rk_1)$ to a CC
 $(S_2, \mathcal{X}_2, \rk_2)$,
also called a \textbf{CC-homomorphism},
 is a function $f \colon \mathcal{X}_1 \to \mathcal{X}_2 $ that satisfies the following conditions:
\begin{enumerate}
\item If $x,y\in\mathcal{X}_1$ satisfy $x\subseteq y$, then $f(x) \subseteq f(y)$. 

\item If $x\in\mathcal{X}_1$, then $\rk_1(x)\geq \rk_2(f(x)).$\\

\end{enumerate}
With the following additional conditions:
\begin{enumerate}

\item If $x\in\mathcal{X}_1$, then $\rk_1(x) = \rk_2(f(x)).$ This constrains inequality in conditions of strict equality.

\item $\forall x\in\mathcal{X}_1, f$ is injective.

\end{enumerate}

We call a such CC-homomorphism a \textbf{CC-embedding}.
\end{definition}
\end{tcolorbox}

Consider a graph \( G = (V, E) \), as in Definition~\ref{graph:main}. Let $C(G)$ denote the set of all cliques in $G$. We can view this graph $G$ as a $CC (S_1 = V, \quad \mathcal{X}_1 = V \cup E, \quad\rk_1)$, where $\rk_1(x) = |x|-1$. We define a new CC $(S_2, \mathcal{X}_2, \rk_2)$, where $S_2 = V$, $\mathcal{X}_2 = \mathcal{X}_1 \cup C(G)$, and the rank function defined by:

\begin{equation}
\rk_2(x) =
\begin{cases}
    0 & \text{if } x \in V \\
    1 & \text{if } x \in E \\
    2 & \text{if } x \in C(G).
\end{cases}
\end{equation}

Then we trivially define the CC-embedding ``Clique Lifting'' $f_{CL}: \mathcal{X}_1 \rightarrow \mathcal{X}_2$ by $f_{CL}(x) = x$. Note that $f_{CL}$ is injective but not surjective.

\chapter{Methodology \& Contribution}

\section{CC Loss Functions}

\label{loss_functions}

In this section, we present variations of what we dub \textbf{Row-wise Permutation Invariant Loss (RWPL)} function used for both training and evaluation. 

Let us qualitatively describe RWPL: For each rank, we take 2 co-incidence matrices representing cells of some rank of 2 different CCs. These matrices have the same number of columns and possibly a different number of rows, since the set $S$ is the same for every CC in the series, hence the number of $0$-cells is as well, while $1$-cells form and dissolve, varying their number in each snapshot. Column indices are $0$-cell indices and row indices are n-cell indices (for our evaluations it will be $1$ and $2$-cells). Each row in these binary matrices represents an n-cell in the CC, where the i'th element in the row is $1$ if the i'th $0$-cell is contained in this n-cell, and 0 otherwise. We pad these matrices to have the same size. This way one of the matrices will have some number of $0$ rows added to match the row number of the other. For each row on one matrix, we compute some similarity metric for every row in the other matrix. Then, using an optimal matching algorithm, for every row in the first matrix, we find a unique pair from the other matrix. We then aggregate the similarity metric of choice over all the optimally matched rows. 

Optimal matching of rows ensures that our loss function is row-wise permutation invariant. This is desirable because, unlike the usual case of adjacency matrices, incidence matrices make cell indexing very challenging. To see this imagine a CC time series where every cell in timestep $0$ is indexed and an incidence matrix is constructed accordingly. If the $0$ indexed cell somehow vanishes (or gets ``divided'', or loses some $0$-cells, or gets modified in a significant way), this cell won't show on the $0$'th row on the incidence matrix and all the other cell indices will have to be adjusted. The author of this paper could not find a suitable way to adjust these indices. If we were to just compute the losses between the i'th row from the first matrix and the i'th row from the other matrix, we would effectively be comparing arbitrary cells. Row-wise permutation invariance ensures that if same or similar cells exist in sampled and target CCs, we compute the loss between them and not arbitrary cells.

To solve this problem, one approach could be to index each row by the ordering of all possible cells given $S$. This is the approach taken in \cite{carrel_combinatorial_2024}, where they have introduced ``Dimension-Constrained Featured Combinatorial Complexes'' so that a maximum number of elements from $S$ each cell can contain was decided in advance and each row of the incidence matrix corresponded to a possible cell. This is computationally extremely expensive, which CCSD suffered from. More formally: Given a set \( S \) with \( |S| = k \) elements, the number of subsets of \( S \) that can contain up to \( N \) elements is given by the following sum of binomial coefficients:

\[
\sum_{n=0}^{N} \binom{k}{n}
\]

Where:
\begin{itemize}
    \item \( k \) is the size of the set \( S \),
    \item \( N \) is the maximum number of elements allowed in the cells,
    \item \( \binom{k}{n} \) represents the binomial coefficient, which counts the number of ways to choose \( n \) elements from \( k \) elements.
\end{itemize}

This means our incidence matrices would need to have $\sum_{n=0}^{N} \binom{k}{n}$ many rows, most of which are $0$ rows, which quickly becomes unfeasible for larger graphs like the ones we would like to model. Therefore, we take incidence matrices that only contain rows corresponding to cells that exist in the CC with no zero rows and compute row-wise permutation invariant losses.

Below we give a more rigorous description of RWPL with varied row losses and optimal matchings.

\subsection{Pairwise Binary Cross-Entropy (BCE) Loss}

Let \( \B = \{0,1\} \) denote the set of binary values. Given two matrices \( \mathbf{A} \in \B^{N_A \times D} \) and \( \mathbf{B} \in \B^{N_B \times D} \), the pairwise BCE loss between each pair of rows \( \mathbf{a}_i \) from \( \mathbf{A} \) and \( \mathbf{b}_j \) from \( \mathbf{B} \) is computed element-wise as:

\[
\text{BCE}(\mathbf{a}_i, \mathbf{b}_j) = - \sum_{d=1}^D \left( b_{j,d} \log a_{i,d} + (1 - b_{j,d}) \log(1 - a_{i,d}) \right) = C_{ij}.
\]

Here, the BCE loss is computed for each corresponding element \( a_{i,d} \) from row \( \mathbf{a}_i \) and \( b_{j,d} \) from row \( \mathbf{b}_j \), where \( d \) indexes the dimension. The total loss for each pair of rows \( \mathbf{a}_i \) and \( \mathbf{b}_j \) is then obtained by summing the element-wise losses across all dimensions.

This computation results in a pairwise distance matrix \( C \in \mathbb{R}^{N_A \times N_B} \), where each element \( C_{ij} \) represents the BCE loss between the \(i\)-th row of \( \mathbf{A} \) and the \(j\)-th row of \( \mathbf{B} \).

\subsection{Pairwise Cosine Loss}

Given two matrices \( \mathbf{A} \in \B^{N_A \times D} \) and \( \mathbf{B} \in \B^{N_B \times D} \), the cosine similarity between rows \( \mathbf{a}_i \) from \( \mathbf{A} \) and \( \mathbf{b}_j \) from \( \mathbf{B} \) is computed as:

\[
S_{ij} = \frac{\mathbf{a}_i \cdot \mathbf{b}_j}{\|\mathbf{a}_i\|_2 \|\mathbf{b}_j\|_2}
\]

where \( S \in \B^{N_A \times N_B} \) is the similarity matrix.

The cosine distance matrix \( C \) is derived from the similarity matrix \( S \):

\[
C_{ij} = 1 - S_{ij}.
\]

This yields the distance matrix \( C \in \B^{N_A \times N_B} \), where each element \( C_{ij} \) measures the dissimilarity between rows \( \mathbf{a}_i \) and \( \mathbf{b}_j \).

\subsection{Optimal Matching}

We use Hungarian \cite{hungarian} and Sinkhorn \cite{Sinkhorn1967ConcerningNM} Algorithms. We provide both of these algorithms in Appendix~\ref{appendix_c}. Hungarian Algorithm is guaranteed to provide the best possible matching, however, we can't use it as a component of the loss function in optimization as it involves discrete operations (like row and column reductions, matching, and assignments) that do not have a gradient. Sinkhorn Algorithm is differentiable and faster but is not guaranteed to provide the best matching. We use 50 iterations of Sinkhorn algorithm in our training and evaluation. In evaluation, we see the output of Sinkhorn to be fairly close to Hungarian. 

The objective of both of these algorithms is to find a permutation \( \sigma: \{1, \dots, N_A\} \rightarrow \{1, \dots, N_B\} \) that aims to minimise the total cost:

\[
\text{RWPL} = \text{Total Cost} = \min_{\sigma} \sum_{i=1}^{N_A} C_{i \sigma(i)}.
\]

Thus we end up with four Row-wise Permutation Invariant Loss functions: \textbf{Hungarian BCE}, \textbf{Sinkhorn BCE}, \textbf{Hungarian Cosine}, and \textbf{Sinkhorn Cosine}.

\section{Model}

 The goal of our model is to learn an approximation for $p(\cdot \mid  (CC_{t-1}))$ from which we can sample $\hat{CC}_{t}$. This is the same approach taken in DAMNETS. However, in DAMNETS, as they work in adjacency matrices with the same number of nodes, it sufficed for them to work with delta matrices which encoded whether an edge remained, was added, or taken away. As also mentioned in Section~\ref{loss_functions}, in incidence matrices we can't work with delta matrices because in an incidence matrix, if the $0$ indexed cell somehow vanishes (getting ``divided'', losing some $0$-cells, or getting modified in a significant way), this cell won't show on the $0$'th row on the incidence matrix and all the other cell indices will have to be adjusted. 
If we were to only compute the losses between the i'th row from the first matrix and the i'th row from the other matrix, we'd effectively be comparing arbitrary cells. Therefore, we directly work on the incidence matrix representation outlined in Section~\ref{representation}.

Still, similar to DAMNETS, our approach is an encoder-decoder framework. We summarise the previous CC by computing cell cochains for every cell in the CC by computing an embedding per every row in the incidence matrices using 2 different encoders explained in the next section. Then we use a decoder similar to but simpler than DAMNET's modified version of BiGG, which we implemented from scratch.

\subsection{Encoder}

\subsubsection{HOAN}
To create the cochains, we use a modified version of Higher-order Attention Network (HOAN), from \cite{hajij_higher-order_2022}, without the mesh classification readout originally implemented, since we are not classifying anything. We modify the readily implemented HOAN using HMC (Higher-order Mesh Classifier) layers from TopoModelX available in \href{https://github.com/pyt-team/TopoModelX?tab=readme-ov-file}{this link}.

First, for every Graph in the dataset, we convert the Graph to a CC using clique lifting and extract $A_{0,1}, A_{1,2}, coA_{1,2}, B_{0,1}, B_{1,2}$. If node features are defined in the data, we construct $ \textbf{H}_0$ using these features. If node features are not defined, we let $\textbf{H}_0 = \textbf{I}_{|\CCX^0|}$. Then we set $\textbf{H}_1=\textbf{1}_{|\CCX^1|}, \textbf{H}_2=\textbf{1}_{|\CCX^2|}$, where $\textbf{1}_{N}$ refers to a vector of $1$s with length $N$ and $\textbf{I}_{N}$ refers to a square $N \times N$ identity matrix.. Let $h_y^{t,(r)}$ denote the cochain vector of cell $y$ of rank $r$ after encoding at iteration $t$.
We encode $\textbf{H}_1, \textbf{H}_2$ using 2 consecutive HMC layers. Each HMC has two levels. The message passing scheme equations for the 2 levels are denoted below. These equations were taken from the HMC tutorial of TopoModelX, adapted to our notation:

\subsubsection{First Level}

\begin{align}
  m^{0\rightarrow 0}_{y\rightarrow x} &= \left((A_{0,1})_{xy} \cdot \text{att}_{xy}^{0\rightarrow 0}\right) h_y^{t,(0)} W^t_{0\rightarrow 0}, &\text{att. push-forward from 0 to 0-cell} \\
  m^{0\rightarrow 1}_{y\rightarrow x} &= \left((B_{0,1}^T)_{xy} \cdot \text{att}_{xy}^{0\rightarrow 1}\right) h_y^{t,(0)} W^t_{0\rightarrow 1}, &\text{att. push-forward from 0 to 1-cell} \\
  m^{1\rightarrow 0}_{y\rightarrow x} &= \left((B_{0,1})_{xy} \cdot \text{att}_{xy}^{1\rightarrow 0}\right) h_y^{t,(1)} W^t_{1\rightarrow 0}, &\text{att. push-forward from 1 to 0-cell} \\
  m^{1\rightarrow 2}_{y\rightarrow x} &= \left((B_{1,2}^T)_{xy} \cdot \text{att}_{xy}^{1\rightarrow 2}\right) h_y^{t,(1)} W^t_{1\rightarrow 2}, &\text{att. push-forward from 1 to 2-cell} \\
  m^{2\rightarrow 1}_{y\rightarrow x} &= \left((B_{1,2})_{xy} \cdot \text{att}_{xy}^{2\rightarrow 1}\right) h_y^{t,(2)} W^t_{2\rightarrow 1}, &\text{att. push-forward from 2 to 1-cell}
\end{align}

\begin{align}
  m^{0\rightarrow 0}_{x} &= \phi_{intra}\left(\bigoplus_{y \in A_{0,1}(x)} m^{0\rightarrow 0}_{y\rightarrow x}\right), &\text{intra-n. aggr. in 0-cells} \\
  m^{0\rightarrow 1}_{x} &= \phi_{intra}\left(\bigoplus_{y \in B_{0,1}^T(x)} m^{0\rightarrow 1}_{y\rightarrow x}\right), &\text{intra-n. aggr. from 0-cell to 1-cell} \\
  m^{1\rightarrow 0}_{x} &= \phi_{intra}\left(\bigoplus_{y \in B_{0,1}(x)} m^{1\rightarrow 0}_{y\rightarrow x}\right), &\text{intra-n. aggr. from 1-cell to 0-cell} \\
  m^{1\rightarrow 2}_{x} &= \phi_{intra}\left(\bigoplus_{y \in B_{1,2}^T(x)} m^{1\rightarrow 2}_{y\rightarrow x}\right), &\text{intra-n. aggr. from 1-cell to 2-cell} \\
  m^{2\rightarrow 1}_{x} &= \phi_{intra}\left(\bigoplus_{y \in B_{1,2}(x)} m^{2\rightarrow 1}_{y\rightarrow x}\right), &\text{intra-n. aggr. from 2-cell to 1-cell}
\end{align}

\begin{align}
  m_x^{(0)} &= \phi_{inter}\left(m^{0\rightarrow 0}_{x} + m^{1\rightarrow 0}_{x}\right), &\text{inter-n. aggr. in 0-cells} \\
  m_x^{(1)} &= \phi_{inter}\left(m^{0\rightarrow 1}_{x} + m^{2\rightarrow 1}_{x}\right), &\text{inter-n. aggr. in 1-cells} \\
  m_x^{(2)} &= \phi_{inter}\left(m^{1\rightarrow 2}_{x}\right), &\text{inter-n. aggr. in 2-cells}
\end{align}

\begin{align}
  i_x^{t,(0)} &= m_x^{(0)}, &\text{intermediate feature vector for 0-cells} \\
  i_x^{t,(1)} &= m_x^{(1)}, &\text{intermediate feature vector for 1-cells} \\
  i_x^{t,(2)} &= m_x^{(2)}, &\text{intermediate feature vector for 2-cells}
\end{align}

where \(i_x^{t,(\cdot)}\) represents intermediate feature vectors.

\subsubsection{Second Level}

\begin{align}
  m^{0\rightarrow 0}_{y\rightarrow x} &= \left((A_{0,1})_{xy} \cdot \text{att}_{xy}^{0\rightarrow 0}\right) i_y^{t,(0)} W^t_{0\rightarrow 0}, &\text{att. push-forward in 0-cells} \\
  m^{1\rightarrow 1}_{y\rightarrow x} &= \left((A_{1,2})_{xy} \cdot \text{att}_{xy}^{1\rightarrow 1}\right) i_y^{t,(1)} W^t_{1\rightarrow 1}, &\text{att. push-forward in 1-cells} \\
  m^{2\rightarrow 2}_{y\rightarrow x} &= \left((coA_{1,2})_{xy} \cdot \text{att}_{xy}^{2\rightarrow 2}\right) i_y^{t,(2)} W^t_{2\rightarrow 2}, &\text{att. push-forward in 2-cells} \\
  m^{0\rightarrow 1}_{y\rightarrow x} &= \left((B_{0,1}^T)_{xy} \cdot \text{att}_{xy}^{0\rightarrow 1}\right) i_y^{t,(0)} W^t_{0\rightarrow 1}, &\text{att. push-forward from 0 to 1-cell} \\
  m^{1\rightarrow 2}_{y\rightarrow x} &= \left((B_{1,2}^T)_{xy} \cdot \text{att}_{xy}^{1\rightarrow 2}\right) i_y^{t,(1)} W^t_{1\rightarrow 2}, &\text{att. push-forward from 1 to 2-cell}
\end{align}

\begin{align}
  m^{0\rightarrow 0}_{x} &= \phi_{intra}\left(\bigoplus_{y \in A_{0,1}(x)} m^{0\rightarrow 0}_{y\rightarrow x}\right), &\text{intra-n. aggr. in 0-cells} \\
  m^{1\rightarrow 1}_{x} &= \phi_{intra}\left(\bigoplus_{y \in A_{1,2}(x)} m^{1\rightarrow 1}_{y\rightarrow x}\right), &\text{intra-n. aggr. in 1-cells} \\
  m^{2\rightarrow 2}_{x} &= \phi_{intra}\left(\bigoplus_{y \in coA_{1,2}(x)} m^{2\rightarrow 2}_{y\rightarrow x}\right), &\text{intra-n. aggr. in 2-cells} \\
  m^{0\rightarrow 1}_{x} &= \phi_{intra}\left(\bigoplus_{y \in B_{0,1}^T(x)} m^{0\rightarrow 1}_{y\rightarrow x}\right), &\text{intra-n. aggr. from 0-cell to 1-cell} \\
  m^{1\rightarrow 2}_{x} &= \phi_{intra}\left(\bigoplus_{y \in B_{1,2}^T(x)} m^{1\rightarrow 2}_{y\rightarrow x}\right), &\text{intra-n. aggr. from 1-cell to 2-cell}
\end{align}

\begin{align}
  m_x^{(0)} &= \phi_{inter}\left(m^{0\rightarrow 0}_{x} + m^{1\rightarrow 0}_{x}\right), &\text{inter-n. aggr. in 0-cells} \\
  m_x^{(1)} &= \phi_{inter}\left(m^{1\rightarrow 1}_{x} + m^{0\rightarrow 1}_{x}\right), &\text{inter-n. aggr. in 1-cells} \\
  m_x^{(2)} &= \phi_{inter}\left(m^{1\rightarrow 2}_{x} + m^{2\rightarrow 2}_{x}\right), &\text{inter-n. aggr. in 2-cells}
\end{align}

\begin{align}
  h_x^{t+1,(0)} &= m_x^{(0)}, &\text{final feature vector for 0-cells} \\
  h_x^{t+1,(1)} &= m_x^{(1)}, &\text{final feature vector for 1-cells} \\
  h_x^{t+1,(2)} &= m_x^{(2)}, &\text{final feature vector for 2-cells}
\end{align}

Inter-neighbourhood and intra-neighbourhood aggregations from Definition~\ref{homp-definition} are abbreviated as ``inter-n. aggr." and ``intra-n. aggr." Attention is abbreviated as ``att."
Note that $\bigoplus = \sum$ for all the above equations. $\bigotimes$ can be simply replaced with $(+)$, however we didn't for the sake of sticking to the original notation. In both message passing levels, \(\phi_{intra}\) and \(\phi_{inter}\) represent common activation functions for within and between neighbourhood aggregations, respectively. Also, \(W\) and \(\text{att}\) represent learnable weights and attention matrices, respectively, that are different at each level. We use LeakyReLU activation function for the attention matrix, as in \cite{hajij_higher-order_2022}. In our experiments we chose $h^{2,(1)}$, and $h^{2,(1)}$ to have length 256, same as in DAMNETS. Note that, unlike encoding nodes in DAMNETS, we encode 1-cells and 2-cells.

\subsection{Decoder}

After the encoding stage we have the feature matrices $\mathbf{H_{1,enc}}, \mathbf{H_{2,enc}}$  for the cochain maps $\mathcal{C}^1, \mathcal{C}^2$ which denote the encoded features of $1$ and $2$-cells respectively. We follow the same decoding steps for each matrix. We explain the process for $\mathbf{H_{enc}^{t}}$, which may refer to either of these matrices at timestep $t$. Let us denote $coB_{0,r}^t$ at timestep $t$ by $coB^t$. Given $coB^{t}$ we would like to generate a predicted $coB^{t+1}$, which we denote by $\hat{coB}^{t+1}$. Let $\CCX^{r, index}$ denote the index set of the cells in $\CCX^{r}$ Given a cell ordering, we start with the first cell in $\CCX^{r, index}$. We would like to generate $\hat{coB}$ row by row. Conditioning every row on the previous rows

\[
p(\hat{coB}) = p(\{\hat{coB}_{c}\}_{c\in \CCX^{r, index}}) = \prod_{c \in \CCX^{r, index}} p(\hat{coB}_{c} \mid\{\hat{coB}_{c_{prev}}: c_{prev} < c \}).
\]

Where $\hat{coB}_{c}$ denotes the $c$'th row in the matrix $\hat{coB}$. So every row on the co-incidence matrix depends on all the rows that come before. The algorithm for generation of the full matrix $\hat{coB}_{c}$ is included as pseudo-code in Algorithm~\ref{algo1}. The algorithm of row generation follows a tree structure, which we include as pseudo-code as Algorithm~\ref{algo2}.

We now explain Algorithm~\ref{algo1}. Here, $g$ is a vector we use to encode the decisions of \texttt{Traverse\_Tree} for the previously sampled rows of  $\hat{coB}^{t+1}$. $k$ is a variable we use to track which row we are at. $N_{\text{new cell}}$ is a hyper-parameter we set in advance. It denotes how many times we iterate over the rows of $\textbf{H}_{enc}^{t}$. The rationale for adding $N_{\text{new cell}}$ is to allow each cell to ``divide" into 
a maximum number of $N_{\text{new cell}}$ cells. This allows us to grow the number of cells rather than always sampling a fixed number of cells. One might think this will always lead to $N_{\text{new cell}}$ times many more cells. However, we allow \texttt{Traverse\_Tree} to sample zero rows, and we discard them before we return  $\hat{coB}^{t+1}$ to the loss function. By carefully tuning the hyper-parameter $P_{min}$, we managed to sample a similar number of cells to  $\hat{coB}^{t}$. This formulation allows the network to both grow if few $0$-cells are sampled and diminish if many $0$-cells are sampled. At line 7, $h_{root} \gets \text{MLP}_{cat}(h, g)$ is an MLP which processes the concatenation of $h$ and $g$. This is the same idea employed in DAMNETS and allows us to combine the cell encoding $h$ from the previous CC with the row-wise auto-regressive term $g$. In our experiments, the variable $min_{nonzero}$ is a hyper-parameter we set to $1$ so that if we sample a cell which contains only one element from $S$, we resample as this violates conditions of a CC.

\begin{algorithm}[htbp]
\caption{\texttt{Incidence\_Matrix\_Sampler}}
\label{algo1}
\begin{algorithmic}[1]
\STATE \textbf{Input:} $\textbf{H}_{enc}^{t}$
\STATE $g \gets \mathbf{0}$ 
\STATE $k \gets 0$
\STATE $\hat{coB}^{t+1} \gets \mathbf{0}_n$ \COMMENT{Initialize sample matrix}

\FOR{$j \gets 1$ \TO $N_{\text{new cell}}$}
    \FOR{each row $h$ in $\textbf{H}_{enc}^{t}$}
        \STATE $h_{root} \gets \text{MLP}_{cat}(h, g)$
        \STATE $valid\_sample \gets \text{False}$
        \WHILE{not $valid\_sample$}
            \STATE $\hat{coB}^{t+1}_k, g_{new} \gets \texttt{Sample\_Row}(h_{root})$
            \STATE $n_{nonzeros} \gets \texttt{count\_nonzeros}(\hat{coB}^{t+1}_k)$
            \IF{$n_{nonzeros} = 0$ \textbf{or} $(min_{nonzero} \leq n_{nonzeros}$}
                \STATE $valid\_sample \gets \text{True}$
            \ENDIF
        \ENDWHILE
        \STATE $k \gets k + 1$
        \STATE $g \gets \text{TFEncoder}(g_{new} ; g_{1:k-1})$
    \ENDFOR
\ENDFOR

\STATE \textbf{Return} $\hat{coB}^{t+1}$
\end{algorithmic}
\end{algorithm}

Let us follow the traverse of Algorithm~\ref{algo2} with an example. 

\begin{example}
    \label{tree}
Let $|S|=8$. Then we are to sample a binary vector of length $8$. On every tree node, the pink squares represent the indexes of the binary vector each tree node influences. The numbers above the tree nodes represent at which steps those nodes are visited. In the traverse, we always attempt to descend to the left child, and then right. Below we list all the decisions made in the tree traverse:

\begin{enumerate}
    \item We start at the root node $[1,8]$. Via a decision making mechanism explained in more detail in the paragraph after this list, we decide to descend to the left child $[1,4]$.
    \item  Here we again decide to descend to the left child $[1,2]$.
    \item We decide to not descend to left child $[1,1]$.
    \item We decide to not descend to left child $[1,2]$ either. So we are moving back to $[1,4]$
    \item Since we had already descended left, we make a decision about the right child. We decide to descend to the right child $[3,4]$.
    \item We decide to descend to left child leaf node $[3,3]$.
    \item Here, we need to make a decision as to whether we will sample the leaf. We choose to sample. We go back up to $[3,4]$.
    \item Since we had already descended left, we make a decision about the right child. We decide not to descend to the right child $[4,4]$. Since all traverse possibilities are exhausted on the left child of the root node, we go back to the root node $[1,8]$ to decide for its right child.
    \item We decide to descend to the right child $[5,8]$.
    \item We decide to descend to the left child $[5,6]$.
    \item We decide to descend to the left child $[5,5]$.
    \item We decide to sample. Now we go back up to $[5,6]$.
    \item We decide to descend to the right child $[5,5]$.
    \item We decide not to sample. Since we have exhausted all the possible traversals in $[5,6]$, we go back up to $[5,8]$.
    \item We decide to descend to the right child $[7,8]$.
    \item We decide not to descend to the left child $[7,7]$.
    \item We decide to descend to the right child $[8,8]$.
    \item We decide to sample. We have completed the tree traverse.
\end{enumerate}

Thus our row sample is 
\[
    \hat{coB}_k = \begin{pmatrix} 0 & 0 & 1 & 0 & 1 & 0 & 0 & 1 \end{pmatrix}.
\]

\end{example}

\begin{figure}[htbp]
\centering
\includegraphics[width = 0.8\hsize]{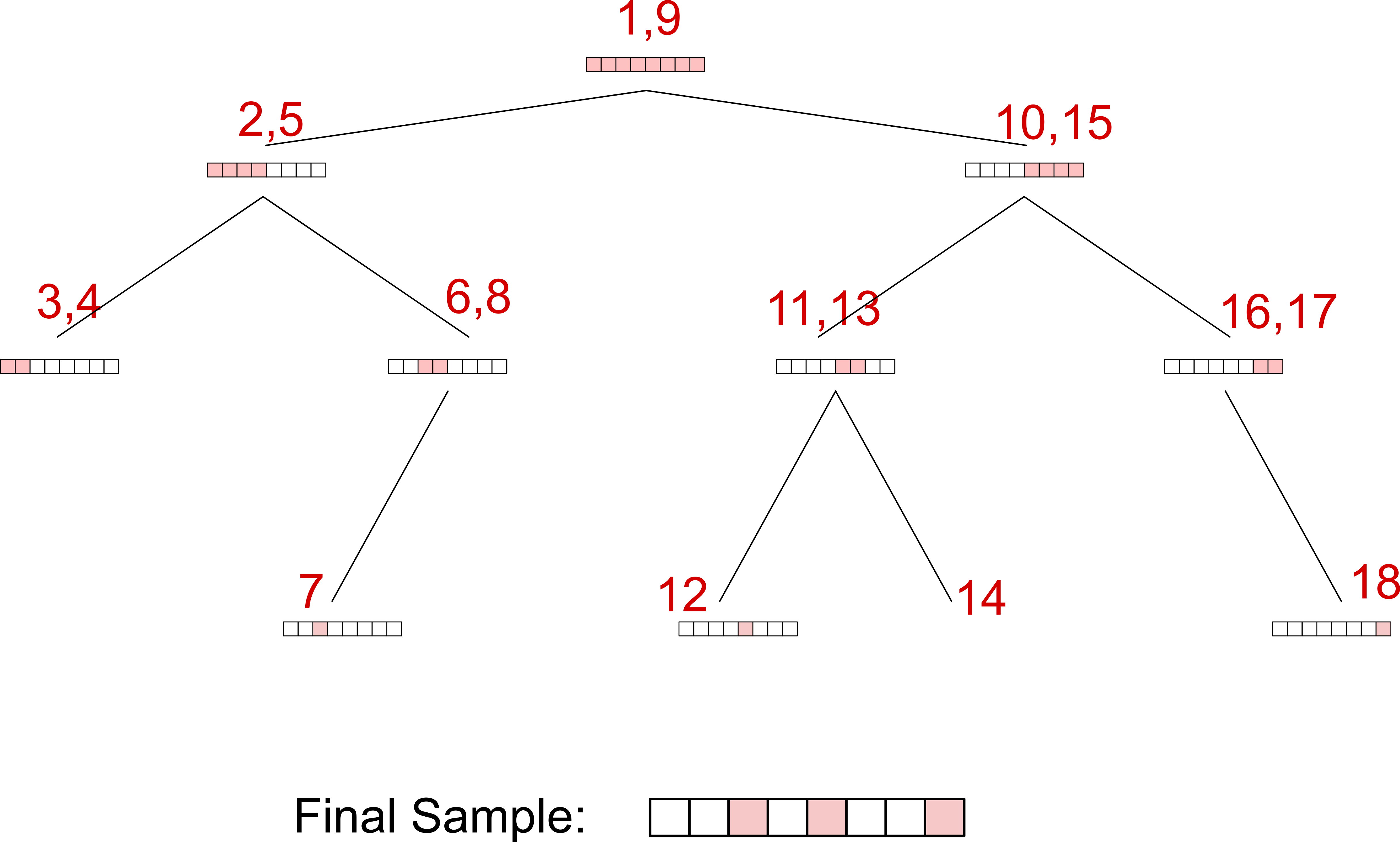}
\caption{A visual representation of an example traverse of Algorithm~\ref{algo2}.}
\label{fig:traverse}
\end{figure}

We now explain Algorithm~\ref{algo2} in detail. $\mathcal{T}_{node}$ denotes which node of the tree we are at. In this implementation $\mathcal{T}_{node}$ is represented by a tuple $(a,b)$ which denotes the range of indices in  $\hat{coB}^{t+1}_k$ that the node is responsible for. $n_{nonzeros}$ denote the number of nonzero elements in $\hat{coB}^{t+1}_k$. $n_{max}$ is a hyper-parameter we set in advance which allows us to sample cells that contain up to a given maximum number of elements from $S$. This is useful because if we know how many elements we know a cell will have (for instance, an edge can only connect 2 nodes), we can include this in the model. In our experiments, we set $n_{max} = 2$ for $1$-cells and $n_{max} = 15$ for $2$-cells. ``If'' statement in line 4 allows us to return $\hat{coB}^{t+1}_k$ immediately if this number is reached. Once a leaf is reached, $\text{MLP}_{leaf}(h)$ in line 13 provides the probability that we will sample the index corresponding to that leaf given the tree state encoded up until that point by $h$. DAMNETS has a similar mechanism. If we are not at a leaf, we traverse through the tree by deciding whether there is a leaf under the left child of the current node. This is decided by $\text{MLP}_{left}$ in line 11, which takes the encoded state of the tree and outputs the parameter for relaxed Bernoulli. We could have omitted relaxed Bernoulli, but we have included it because we wanted some diversity when we first started training the model. Otherwise, when we first began to train, we observed very similar samples. The traverse to the right tree node is similar. We use the LSTM\cite{lstm} cell in lines 15 and 20 to encode the history of the tree traverse into hidden state vector $h$ and internal cell state $c$. Tree\_LSTM \cite{treelstm} in line 23 allows us to encode the decisions on the lower levels of the tree.

\begin{algorithm}[htbp]
\caption{\textnormal{\texttt{Traverse\_Tree}}}
\label{algo2}
\begin{algorithmic}[1]
\STATE \textbf{Input:} $h$, $\mathcal{T}_{node} \gets \mathcal{T}_{root}$, $c \gets \mathbf{0}_{size(h)}$, $\hat{coB}^{t+1}_k \gets \mathbf{0}_n$ (Default values are assigned by $\gets$ if not given.)
\STATE \textbf{Output:} $\hat{coB}^{t+1}_k$ (updated), $h$ (updated), $c$ (updated)
\\
\STATE Initialize $h_{left}$, $c_{left}$, $h_{right}$, $c_{right}$ as None (if not yet assigned a value.)
\STATE $n_{nonzeros} \gets \texttt{count\_nonzeros}(\hat{coB}^{t+1}_k)$
\IF{$n_{nonzeros} = n_{max}$}
    \STATE \textbf{Return} $\hat{coB}^{t+1}_k$, $h$, $c$
\ENDIF
\\
\IF{$\mathcal{T}_{node}$ is a leaf}
    \STATE $i \gets \text{index corresponding to leaf } \mathcal{T}_{node}$
    \STATE $\hat{coB}^{t+1}_k[i] \gets \text{Bernoulli}(\text{MLP}_{leaf}(h))$
    \STATE \textbf{Return} $\hat{coB}^{t+1}_k$, $h$, $c$
\ENDIF
\\
\STATE $P_{left} \gets \text{Relaxed\_Bernoulli}(\text{MLP}_{left}(h))$
\IF{$P_{left} > P_{min}$}
    \STATE $h_{left}, c_{left} \gets \text{LSTM}(h, c)$
    \STATE $\hat{coB}^{t+1}_k, h_{left}, c_{left} \gets \texttt{Traverse\_Tree}(h_{left}, \texttt{Left\_Child}(\mathcal{T}_{node}), c_{left}, \hat{coB}^{t+1}_k)$
\ENDIF
\\
\STATE $P_{right} \gets \text{Relaxed\_Bernoulli}(\text{MLP}_{right}(h))$
\IF{$P_{right} > P_{min}$}
    \STATE $h_{right}, c_{right} \gets \text{LSTM}(h, c)$
    \STATE $\hat{coB}^{t+1}_k, h_{right}, c_{right} \gets \texttt{Traverse\_Tree}(h_{right}, \texttt{Right\_Child}(\mathcal{T}_{node}), c_{right}, \hat{coB}^{t+1}_k)$
\ENDIF
\\

\STATE $h_{combined}, c_{combined} \gets \text{Tree\_LSTM}(h_{left}, c_{left}, h_{right}, c_{right})$

\STATE \textbf{Return} $\hat{coB}^{t+1}_k$, $h_{combined}$, $c_{combined}$
\end{algorithmic}
\end{algorithm}

\chapter{Experiments}

\label{sec:experiments}
\section{Problem Statement}

\textbf{Graphs:} Given a timestamp \(t\), we represent each graph in the data as a tuple of its adjacency and node feature matrices: $G_t = (\mathbf{X}_t, \mathbf{A}_t)$, where  $\mathbf{X}_t \in \R^{N \times F} \text{ and } \mathbf{A}_t \in \{0,1\}^{N \times N}$. Here, $N$ is the total number of nodes, and $F$ is the number of node features. We then represent a graph time series as a sequence of graphs over time: \((G_t)_{t=0}^T = (G_1, G_2, \dots, G_T) \), where $T$ is the last timestamp. For our synthetically generated datasets, we have more than one such sequence of graphs. Therefore, we represent the full dataset as a set of graph sequences: \[ \{(G_t^1)_{t=0}^T, (G_t^2)_{t=0}^T, \dots, (G_t^{L-1})_{t=0}^T, (G_t^L)_{t=0}^T. \}\]
Given an unseen graph \( G_t\) from such a dataset, our task is to predict \(G_{t+1}\). \\

\textbf{Combinatorial Complexes:} Just as above, given a timestamp \(t\), we represent each CC in the data as a tuple of its co-incidence and node feature matrices: $CC_t = (\mathbf{X}_t, \mathbf{\mathbf{coB}}_{01t}, \mathbf{\mathbf{coB}}_{02t})$, where  $\mathbf{X}_t \in \R^{N \times F} \text{, } \mathbf{\mathbf{coB}}_{01t} \in \{0,1\}^{E \times N}, \text{ and } \mathbf{\mathbf{coB}}_{02t} \in \{0,1\}^{M \times N}$. Here, $N$ is the total number of 0-cells, $E$ is the total number of 1-cells, $M$ is the total number of 2-cells, and $F$ is the number of node features. We then represent a CC time series as a sequence of CCs over time: \((CC_t)_{t=0}^T = (CC_1, CC_2, \dots, CC_T) \), where $T$ is the last timestamp. For our synthetically generated datasets, we have more than one such sequence of CCs. Therefore, we represent the full dataset as a set of CC sequences: \[ \{(CC_t^1)_{t=0}^T, (CC_t^2)_{t=0}^T, \dots, (CC_t^{L-1})_{t=0}^T, (CC_t^L)_{t=0}^T. \}\]
Given an unseen graph \( CC_t\) from such a dataset, our task is to predict \(CC_{t+1}\). Note that our datasets are originally graphs. We lift these graph datasets into CC datasets using the lifting method explained in Section~\ref{lifting}. We resort to this method as there exist no temporal datasets in CC form.\\

\section{Metrics}

For autoregressive graph generation tasks, we have used metrics based on the following graph statistics: degree, degree centrality, local clustering coefficient, closeness centrality, eigenvalue centrality, average clustering coefficient, transitivity, spectrum of the graph laplacian, temporal correlation, temporal closeness, temporal clustering coefficient, all of which are explained in Appendix~\ref{appendix_a} and ~\ref{appendix_b}.

Evaluating generated data is a challenging task. Within the literature, for graphs, the dominant approach is to compute some statistics (like the above) about the sampled and target graphs, compute a histogram of these properties over the nodes or vertices in the graph, use a kernel (often a Gaussian) to compute the Maximum Mean Discrepancy (MMD) \cite{gretton_kernel_2012} which can be thought of a distance function to compare distributions. This is the approach taken in DAMNETS, AGE, and BiGG. However, we take an alternative approach introduced in \cite{souid_temporal_2024} as it is better suited to the time-series nature of our data. This approach is as follows: Firstly, the graph statistics mentioned above are computed for generated sequences. For global graph statistics like average clustering coefficient and transitivity, we take the scalar statistic as is. For local statistics (such as ones computed per node), we generate a histogram for the given statistic over the nodes of the graph, ie, a distribution for the given statistic and a given graph. We then take the predefined quantiles of this distribution. In \cite{souid_temporal_2024} and this paper, we take the first and third quartiles as the quantiles. Finally, we compute the multidimensional Dynamic Time Warping distance (DTW) \cite{shokoohi-yekta_generalizing_2017} between the generated graph sequence statistics and target sequence graph statistics. For all these metrics we have used the implementation in \cite{souid_temporal_2024}. Note that the authors' code is not public.

For evaluating CC data, we use the loss functions described in Section~\ref{loss_functions}. We abbreviate these loss functions as follows: SC - \textit{Sinkhorn Cosine}, HC - \textit{Hungarian Cosine}, SBCE - \textit{Sinkhorn BCE}, HBCE - \textit{Hungarian BCE Loss}. The instances where SBCE blew up are represented by (-) entries in the tables. 

\section{Random Model}
In addition to DAMCC, DAMNETS, and AGE, we implement a ``random model'' as follows: at each timestep, we take target co-incidence matrices $\mathbf{\mathbf{coB}}_{01}^{\text{target}}$ of size $n_{1} \times n_{0}$, and $\mathbf{\mathbf{coB}}_{02}^{\text{target}}$ of size $n_{2} \times n_{0}$, where $n_{0}$, $n_{1}$, $n_{2}$ denote the number of 0-, 1-, and 2-cells, respectively. To generate random CCs, we generate two random matrices of size $n_{1} \times n_{0}$ and $n_{2} \times n_{0}$. We constrain the number of 1's in each row to be in $N_{1s} = \{0\} \cup [\text{min}, \text{max}]$. For $\mathbf{\mathbf{coB}}_{01}^{\text{random}}$ we set $\text{min} = \text{max} = 2$; for $\mathbf{\mathbf{coB}}_{02}^{\text{random}}$ we set $\text{min} = 3$, $\text{max} = 15$, just as in DAMCC. We uniformly randomly sample an integer $n \in N_{1s}$, and for every row of a random incidence matrix, we again uniformly randomly place $n$ 1's into these rows. Note that these random matrices have the slight advantage of having access to the dimensions of the target matrices, whereas other models do not.

\section{Implementation}

 For experiments, we use Adam\cite{kingma2017adammethodstochasticoptimization} optimizer with Sinkhorn Cosine Loss. During training, we train all models using automated parameter optimization as follows: For AGE and DAMNETS, we decay the learning rate by a factor of 0.1 after 120 epochs of no improvement, keep training until there is no validation loss improvement for 250 epochs, and take the model with best validation loss. AGE and DAMNETS are both implemented in PyTorch but DAMNETS's BiGG component is implemented in C++. Both models are extremely efficient compared to DAMCC. Training Both of these models on all datasets takes less than half an hour in total on all the datasets. For DAMCC, we decay the learning rate by a factor of 0.1 after 10 epochs of no improvement, and keep training until there is no validation loss improvement for 20 epochs. The reason why we chose such lower numbers is that our model's tree search algorithm is quite inefficient and we couldn't implement batching due to the varying size of the incidence matrices. One approach could've been padding to bring the inputs and outputs to the same size, which is a reasonable approach that we can implement in the future. Therefore, our model training is extremely slow and takes around 24 hours to converge. See more on these shortcomings in the discussion. 

We performed all experiments on a brev.dev \footnote{https://www.brev.dev/} remote server running Intel(R) Xeon(R) Platinum 8259CL CPU @ 2.50GHz with 4 cores
 and 1 Nvidia Tesla T4 GPU with 16 GB GPU memory, and 15 GB RAM with operating system Ubuntu 20.04.6 LTS. Our implementation is available on \href{https://github.com/Ata-Tuna/DAMCC_1.2.git}{\textbf{this link}}.

\section{Datasets and Evaluation Results}

\subsection{England Covid}

This dataset is a recorded history of COVID-19 cases in England NUTS3 \cite{noauthor_categorynuts_2013} regions, from 3 March to 12 May. The original paper can be found here: \cite{panagopoulos2021transfergraphneuralnetworks}. In our experiments, we have imported this dataset via PyTorch Geometric Temporal as it was readily available there \cite{rozemberczki_pytorch_2021}. The dataset comprises 53 directed graphs, ordered temporally. Every node denotes a region. Edges of the graphs are originally weighted and denote the number of people who moved from one region to another at time $t$. However, we do not use these weights and convert the graphs to undirected graphs and remove self loops. We use node features.  In detail, we let $\mathbf{x}_u^{(t)} = (c_u^{(t-d)}, \ldots, c_u^{(t)})^\top \in \mathds{R}^d$ be a vector of node attributes, which contains the number of cases for each one of the past $d = 8$ days in region $u$. The authors stated that they have used ``the cases of multiple days instead of just the day before the prediction because case reporting is highly irregular between days, especially in decentralized regions.'' PyTorch Geometric documentation informs that the dataset is segmented in days, yet how many days is somewhat unclear as there are 70 days in the given time window but the dataset has 53 graphs. This dataset was used to evaluate DAMNETS and AGE in \cite{souid_temporal_2024}. However, perhaps due to it only having 53 timestamped graphs, they have trained and tested the models on the full dataset. In our experiments we have divided the dataset into training, validation, and testing sets, with lengths 37, 8, and 8, respectively, keeping the original order of the dataset.

We report graph statistic metrics for this dataset in Table~\ref{encovid-table}, and CC evaluation results in Table~\ref{enc_cc1}, \ref{enc_cc2}.

\begin{table}[ht]
\centering
\caption{Comparison of Models on Various Metrics for England Covid dataset. Lower is better.}
\label{encovid-table}
\begin{tabular}{lcccc}
\toprule
 & \multicolumn{4}{c}{\textit{Models}} \\
\cmidrule{2-5}
{Metrics} & AGE & DAMCC & DAMNETS & RANDOM \\
\midrule
\textit{spectral} & \underline{\textbf{0.001}} & \textbf{0.003} & \underline{\textbf{0.001}} & 0.006\\
\textit{degree} & \underline{\textbf{0.001}} & \textbf{0.003} & \underline{\textbf{0.001}} & 0.007\\
\textit{deg. cent.} & \underline{\textbf{0.000}} & \underline{\textbf{0.000}} & \underline{\textbf{0.000}} & \underline{\textbf{0.000}}\\
\textit{local clust. coeff.} & \underline{\textbf{0.001}} & 0.012 & \textbf{0.002} & 0.012\\
\textit{close. cent.} & \textbf{0.001} & \underline{\textbf{0.000}} & \underline{\textbf{0.000}} & 0.002\\
\textit{eigen. cent.} & \underline{0.005} & \textbf{0.004} & \underline{\textbf{0.003}} & 0.017\\
\textit{ave. clust. coeff.} & 0.381 & \textbf{1.144} & \underline{0.201} & \underline{\textbf{0.138}}\\
\textit{transitivity} & \textbf{0.192} & 0.672 & \underline{\textbf{0.115}} & \underline{0.318}\\
\textit{temp. corr.} & \textbf{0.120} & 0.661 & \underline{\textbf{0.062}} & \underline{0.429}\\
\textit{temp. close. diff.} & \textbf{13.814} & \underline{16.715} & \underline{\textbf{5.097}} & 49.267\\
\textit{temp. clust. coeff. diff.} & \textbf{0.202} & 0.453 & \underline{\textbf{0.113}} & \underline{0.237}\\
\bottomrule
\end{tabular}
\end{table}

\begin{table}[ht]
\centering
\caption{Comparison of Different Losses for $\mathbf{coB}_{0,1}$ across models for England Covid dataset. Lower values are better.}
\label{enc_cc1}
\begin{tabular}{lcccc}
\toprule
 & \textit{SC} & \textit{HC} & \textit{SBCE} & \textit{HBCE} \\
\midrule
DAMCC & \underline{0.5643} & \underline{0.4147} & \underline{1.3867} & \underline{1.2858} \\
RANDOM & 0.5849 & 0.4588 & 1.4433 & 1.4227 \\
DAMNETS & \underline{\textbf{0.2230}} & \textbf{0.0491} & \underline{\textbf{0.3609}} & \textbf{0.1523} \\
AGE & \textbf{0.2483} & \underline{\textbf{0.0465}} & \textbf{0.4432} & \underline{\textbf{0.1442}} \\
\bottomrule
\end{tabular}
\end{table}

\begin{table}[ht]
\centering
\caption{Comparison of Different Losses for $\mathbf{coB}_{0,2}$ across models for England Covid dataset. Lower values are better.}
\label{enc_cc2}
\begin{tabular}{lcccc}

\toprule
 & \textit{SC} & \textit{HC} & \textit{SBCE} & \textit{HBCE} \\
\midrule
DAMCC & \underline{\textbf{0.7914}} & \underline{\textbf{0.6160}} & \underline{\textbf{4.3684}} & \underline{\textbf{4.0027}} \\
RANDOM & 0.8339 & 0.6738 & - & 7.1447 \\
\bottomrule

\end{tabular}
\end{table}

\subsection{Community Decay}

Community Decay Model has no specific source to pinpoint within the literature. We will take the definition and implementation where we found it \cite{clarkson_damnets_2023}: We represent the initial network with a graph $G_0 = (V, E_0)$ adhering to Definition~\ref{graph:main}. Then we define a surjective community attachment function \(f_c :V \rightarrow Q\), where $Q$ is the set of communities. In our case, \(\left| Q\right| = 3\). We initialise the time series by setting \(V_0\) by

\begin{equation}
\Prob((i,j) \in E) =
\begin{cases}
    p_{int} & \text{if } f_c(i) = f_c(j) \\
    p_{ext} & \text{if } f_c(i) \neq f_c(j).
\end{cases}
\end{equation}

We construct the consequent Graphs in the time series as follows: We set a decay community $D \in Q$. We define the set of internal edges for this community $D$ by
\(
D^{int}_t := \{(i,j) \in E_t \mid f_c(i) = f_c(j) = D\}. 
\)

At each timestep $t$, a set proportion $f_{dec}$ of $D^{int}_t$ are replaced with external edges by uniform randomly picking an internal edge $(i, j)$ and removing it from the graph edge set, and then picking one of the two nodes of the edge removed ($i$ or $j$) with probability \(\frac{1}{2}\). Without loss of generality let that node be $i$. We then uniform randomly pick a node $k$ with $f_c(k) \neq D$ s.t. $(i,k) \notin E_t$. We then add $(i,k)$ to $E_t$. We repeat this until we reach final timestep $T$.

For our experiments we set \(T = 40, |Q| = 3, p_{int} = 0.9, p_{ext} = 0.01, f_{dec} = 0.3\), with each community starting off with \(15\) nodes, \(45\) in total. We generate $10$ such time series: $5$ for training, $2$ for validation, and $3$ for testing. For CC evaluation, we lift these graphs to CCs according to the lifting procedure described in \ref{lifting}. Results for the graph and CC time series evaluations are displayed in Tables~\ref{comm_g} and~\ref{comm_cc1},\ref{comm_cc2} respectively.
\begin{table}[ht]
\centering
\caption{Comparison of Models on Various Metrics for Community Decay Dataset. Lower is better.}
\label{comm_g}
\begin{tabular}{lcccc}
\toprule
 & \multicolumn{4}{c}{\textit{Models}} \\
\cmidrule{2-5}
{Metrics} & AGE & DAMCC & DAMNETS & RANDOM \\
\midrule
\textit{spectral} & \textbf{0.003} & 0.022 & \textbf{\underline{0.002}} & \underline{0.007} \\
\textit{degree} & \underline{0.021} & \textbf{0.017} & \textbf{\underline{0.013}} & 0.045 \\
\textit{deg. cent.} & \textbf{\underline{0.000}} & \textbf{\underline{0.000}} & \textbf{\underline{0.000}} & \textbf{\underline{0.000}} \\
\textit{local clust. coeff.} & \textbf{\underline{0.006}} & \underline{0.019} & \textbf{0.008} & 0.034 \\
\textit{close. cent.} & \textbf{\underline{0.002}} & \underline{0.003} & \textbf{\underline{0.002}} & 0.004 \\
\textit{eigen. cent.} & \underline{0.025} & \textbf{\underline{0.017}} & \textbf{0.018} & 0.043 \\
\textit{ave. clust. coeff.} & \textbf{0.158} & 1.836 & \textbf{\underline{0.106}} & \underline{0.475} \\
\textit{transitivity} & \textbf{0.225} & 2.169 & \textbf{\underline{0.128}} & \underline{0.754} \\
\textit{temp. corr.} & \textbf{0.082} & 0.462 & \textbf{\underline{0.077}} & \underline{0.415} \\
\textit{temp. close. diff.} & \textbf{1.969} & \underline{2.917} & \textbf{\underline{0.200}} & 3.761 \\
\textit{temp. clust. coeff. diff.} & \textbf{0.030} & \underline{0.043} & \textbf{\underline{0.023}} & 0.208 \\
\bottomrule
\end{tabular}
\end{table}

\begin{table}[ht]
\centering
\caption{Comparison of Different Losses for $\mathbf{coB}_{0,1}$ across models for Community Decay  dataset. Lower values are better.}
\label{comm_cc1}
\begin{tabular}{lcccc}
\toprule
 & \textit{SC} & \textit{HC} & \textit{SBCE} & \textit{HBCE} \\
\midrule
DAMMC & \underline{0.4223} & \underline{0.2645} & \underline{2.1036} & \underline{2.1751} \\
RANDOM & 0.4481 & 0.3706 & 2.1945 & 2.1824 \\
DAMNETS & \underline{\textbf{0.1366}} & \underline{\textbf{0.0589}} & \underline{\textbf{0.3470}} & \textbf{0.4557} \\
AGE & \textbf{0.1451} & \textbf{0.0614} & \textbf{0.4415} & \underline{\textbf{0.4452}} \\
\bottomrule
\end{tabular}
\end{table}

\begin{table}[ht]
\centering
\caption{Comparison of Different Losses for $\mathbf{coB}_{0,2}$ across models for Community Decay dataset. Lower values are better.}
\label{comm_cc2}
\begin{tabular}{lcccc}

\toprule
 & \textit{SC} & \textit{HC} & \textit{SBCE} & \textit{HBCE} \\
\midrule
DAMCC & \underline{\textbf{0.6601}} & \underline{\textbf{0.5200}} & - & \underline{\textbf{10.7138}} \\
RANDOM & 0.6804 & 0.5565 & - & 18.4546 \\
\bottomrule

\end{tabular}
\end{table}

\subsection{Barab\'asi--Albert}

Barab\'asi--Albert (BA) model was initially introduced in \cite{Albert_2002} to create synthetic datasets with the \emph{scale-free} property. Given a node \(i\), let \(k_i\) denote the degree of node \(i\), which denotes the number of nodes a node is connected to. We say that a graph is scale-free if \(\Prob(k_i = d) \propto \frac{1}{d^\gamma}\), where \(\gamma \in \R\). We follow the description and implementation in \cite{clarkson_damnets_2023}. BA model has two parameters: a set number of nodes \(n\) and a number of edges \(m\) to add at each timestep. The graph is initialised with \(m\) fully connected nodes. At each timestep, we pick an unconnected node and connect to \(m\) many other nodes with the probability of attaching to node \(i\):

\[
\Pi(k_i) = \frac{k_i}{\sum_{j} k_j},
\]

which is called the \emph{preferential attachment property}.
This procedure yields a graph time series of length \(T = n - m\). For our experiments, we have generated $10$ such time-series with $n = 50 \text{ and } m = 4$, yielding $T=46$. We split the time $10$ time series as follows: $5$ for training, $2$ for validation, and $3$ for testing. For CC evaluation, we lift these graphs to CCs according to the lifting procedure described in \ref{lifting}. Results for the graph and CC time series evaluations are displayed in Tables~\ref{ba_g_table} and~\ref{ba_cc_table},\ref{ba_cc2} respectively.

\begin{table}[ht]
\centering
\caption{Comparison of Models on Various Metrics on Barab\'asi--Albert dataset. Lower is better.}
\label{ba_g_table}
\begin{tabular}{lcccc}
\toprule
 & \multicolumn{4}{c}{\textit{Models}} \\
\cmidrule{2-5}
{Metrics} & AGE & DAMCC & DAMNETS & RANDOM \\
\midrule
\textit{spectral} & \textbf{0.027} & \underline{\textbf{0.022}} & \underline{0.043} & 0.152\\
\textit{degree} & \underline{0.039} & \textbf{0.028} & \underline{\textbf{0.021}} & 0.147\\
\textit{deg. cent.} & \underline{\textbf{0.000}} & \underline{\textbf{0.000}} & \underline{\textbf{0.000}} & \underline{\textbf{0.000}}\\
\textit{close. cent.} & \underline{0.045} & \textbf{0.037} & \underline{\textbf{0.023}} & 0.167\\
\textit{eigen. cent.} & \underline{\textbf{0.010}} & \underline{0.051} & \underline{\textbf{0.010}} & 0.143\\
\textit{ave. clust. coeff.} & \underline{\textbf{0.039}} & \underline{0.254} & \textbf{0.070} & 2.208\\
\textit{transitivity} & \textbf{0.205} & \underline{1.237} & \underline{\textbf{0.145}} & 1.395\\
\textit{temp. corr.} & \textbf{0.173} & 0.707 & \underline{\textbf{0.057}} & \underline{0.456}\\
\textit{temp. close. diff.} & \textbf{3.933} & \underline{12.321} & \underline{\textbf{3.089}} & 16.953\\
\textit{temp. clust. coeff. diff.} & \underline{\textbf{0.004}} & \underline{0.030} & \textbf{0.005} & 0.274\\
\bottomrule
\end{tabular}
\end{table}

\begin{table}[ht]
\centering
\caption{Comparison of Different Losses for $\mathbf{coB}_{0,1}$ across models for Barab\'asi--Albert dataset. Lower values are better.}
\label{ba_cc_table}
\begin{tabular}{lcccc}
\toprule
 & \textit{SC} & \textit{HC} & \textit{SBCE} & \textit{HBCE} \\
\midrule
DAMMC & \underline{0.6233} & \underline{0.5569} & \textbf{4.7378} & \underline{4.4555} \\
RANDOM & 0.6466 & 0.5983 & \underline{4.7925} & 4.7867 \\
DAMNETS & \underline{\textbf{0.1497}} & \underline{\textbf{0.0455}} & 302.2819 & \underline{\textbf{0.3639}} \\
AGE & \textbf{0.1590} & \textbf{0.0763} & \underline{\textbf{0.8502}} & \textbf{0.6101} \\
\bottomrule
\end{tabular}
\end{table}

\begin{table}[ht]
\centering
\caption{Comparison of Different Losses for $\mathbf{coB}_{0,2}$ across models for Barab\'asi--Albert dataset. Lower values are better.}
\label{ba_cc2}
\begin{tabular}{lcccc}

\toprule
 & \textit{SC} & \textit{HC} & \textit{SBCE} & \textit{HBCE} \\
\midrule
DAMCC & 0.7680 & 0.6730 & - & \underline{\textbf{9.4650}} \\
RANDOM & \underline{\textbf{0.7596}} & \underline{\textbf{0.6656}} & - & 16.9635 \\
\bottomrule

\end{tabular}
\end{table}

\section{Discussion \& Analysis}

\subsection{Graph Metrics}

Across all the datasets, our experiment results align with those of \cite{souid_temporal_2024} and \cite{clarkson_damnets_2023}. We use the same graph evaluation metrics and implementation used in \cite{souid_temporal_2024}. Note that these graph metrics, although appear with the same name, are slightly different to those in \cite{clarkson_damnets_2023} as they use MMDs to compare graph statistics, we compute DTW distance based on the quantiles to of these statistics. Still, same as in \cite{souid_temporal_2024} and \cite{clarkson_damnets_2023}, DAMNETS mostly outperforms all the models. Unlike in \cite{souid_temporal_2024} and \cite{clarkson_damnets_2023}, we also provide a random model to see whether these models do really capture the underlying distribution of these graphs, or is it just that the benchmark models are inadequate. We see that DAMNETS almost always outperform the random model except for the average cluster coefficient metric for the England Covid dataset. We attribute this to the fact that, unlike Community Decay and Barab\'asi--Albert datasets, this is a more homogeneous and complex real life dataset where cliques are spread across the nodes more randomly. As the random model also samples nodes uniform randomly, cliques form more uniformly. AGE is usually second best across these graph statistics, with DAMCC and random model performing similarly. DAMCC usually slightly outperforms the random model in Community Decay and Barab\'asi--Albert dataset. However, the reason for this is not that DAMCC actually learned the underlying distribution. DAMCC, in its untrained state, is more likely to sample node indexes that are closer to each other due to its tree structure. Once we are at a leaf node corresponding to a graph node with an odd index $i$, we are only 2 decisions away from sampling a graph node with index $i+1$. Upon inspection of Community Decay and Barab\'asi--Albert datasets, we noticed, because of their implementation, we borrowed from \cite{clarkson_damnets_2023}, that it starts with the first connected nodes having the first few indices. This is independent of these datasets' mathematical construction and is simply due to their implementation in our project. Therefore, by chance, even if DAMCC doesn't learn, the incidence matrices it generates are slightly similar to the targets, hence the performance improvement over randomness.

\subsection{CC Metrics}

HBCE is the most reliable metric here, followed by HC as the other two are approximations for these. However, we include them since SC is the loss function used in training, and SBCE was attempted but resulted in frequent division by zero errors during training. AGE and DAMNETS perform similarly across all the datasets, significantly outperforming both DAMCC and the random model. The difference between DAMNETS and AGE is more significant in the BA dataset, where DAMNETS outperforms. Across all the datasets, DAMCC marginally outperforms the random model in 1-cell generation. For reasons outlined in the above section, this does not mean that the model has learned. In 2-cell generation, the difference is significant. However, we claim that this significant difference is not to be attributed to DAMCC learning the underlying distribution again, but that perhaps in its unlearned state, it is less likely to sample 2-cells which contain many cells as opposed to the random model. We say this because the best validation epochs are around 4-25th epochs and there isn't a significant decrease in training loss or validation loss.

\section{Ablation Studies}
Upon conducting the experiments above, given that we couldn't train DAMCC properly, we felt the necessity to conduct ablation studies. We include them here and not in the appendix, because we believe that the results are significant to the overall narrative of this report.

Given the results of the previous section, we hypothesize that the resulting loss landscape from row-wise permutation-invariant losses is either highly complex or flat, with numerous local minima. This complexity has prevented the model from learning effectively, especially for larger networks. To back up this claim, we conducted further experiments. To see if a much smaller dataset would remedy this issue, we created a BA dataset with $n = 6 \text{ and } m = 1$, yielding $T=5$. See a visualisation of this dataset in Figure~\ref{small_ba_img}. We trained four versions of DAMCC on this dataset:

\label{models}
\textbf{Model 1:} DAMCC trained on regular BCE loss for matrices, and removed relaxed Bernoulli from the Algorithm~\ref{algo2}. We keep the Bernoulli in line 10 so this traversal is not entirely deterministic, but we name it ``Deterministic Traversal'' in the plots.

\textbf{Model 2:} DAMCC trained on regular BCE loss for matrices, and with Algorithm~\ref{algo2} as it is. We name this ``Stochastic Traversal'' in the plots.

\textbf{Model 3:} DAMCC trained on regular SC loss, and removed relaxed Bernoulli from the Algorithm~\ref{algo2}. We still keep the Bernoulli in line 10.

\textbf{Model 4:} DAMCC trained on regular SC loss for matrices, and with Algorithm~\ref{algo2} as it is.

Using BCE loss for this data is justified as no $1$-cells disappear and hence keep the indexing the same. We train all the above models only on the $1$-cells as there are no $2$-cells in this small dataset. We have plotted the training and validation losses for all the models above in Figures~\ref{fig:1}, \ref{fig:2}, \ref{fig:3}, \ref{fig:4}. We see that the only model which managed to learn is Model 1 in the above list. This shows us that not only our suspicion of our row-wise permutation invariant functions were true, but also that the stochastic component of our traversal prevented the model from learning. Still, this goes on to show that the simplified version of DAMCC Model 1 can learn from data and sample incidence matrices. The challenge we have not been able to overcome thus far is the indexing issue for larger CCs.

\begin{figure}[ht]
\centering
\includegraphics[width = 1\hsize]{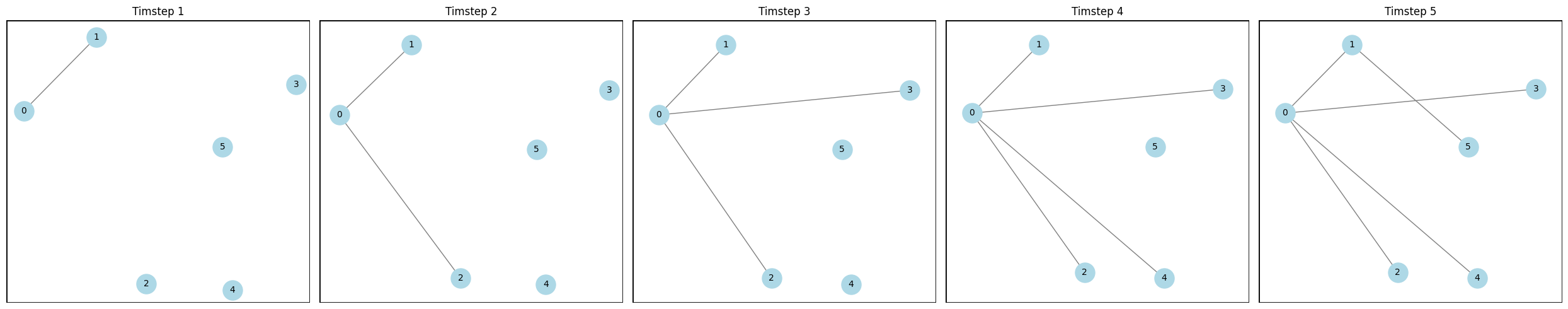}
\caption{A visual representation of the small BA data.}
\label{small_ba_img}
\end{figure}

\begin{figure}[ht]
\centering
\includegraphics[width = 0.8\hsize]{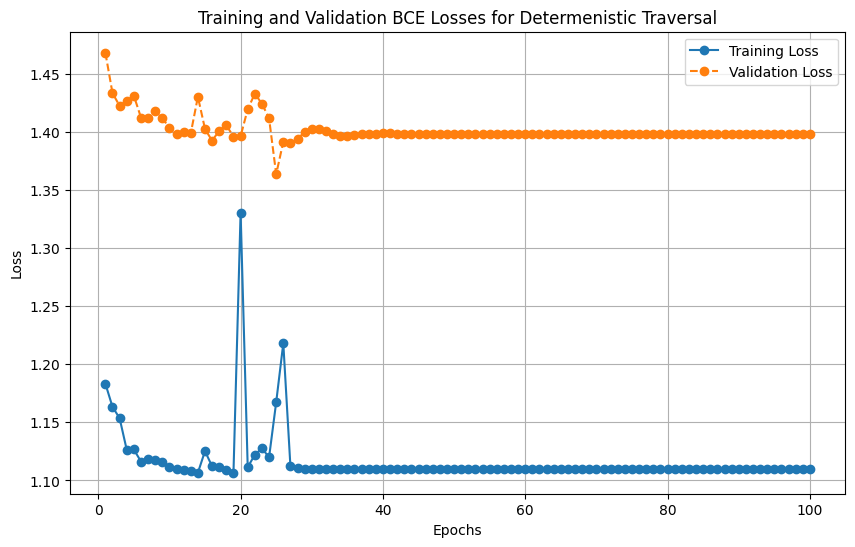}
\caption{Training and validation losses for model 1 in the list in \ref{models}}
\label{fig:1}
\end{figure}

\begin{figure}[ht]
\centering
\includegraphics[width = 0.8\hsize]{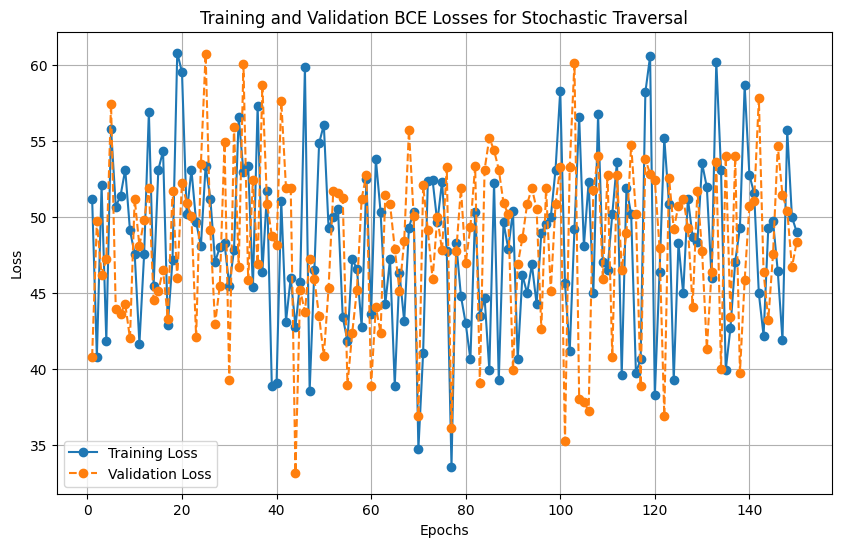}
\caption{Training and validation losses for model 2 in the list in \ref{models}}
\label{fig:2}
\end{figure}

\begin{figure}[ht]
\centering
\includegraphics[width = 0.8\hsize]{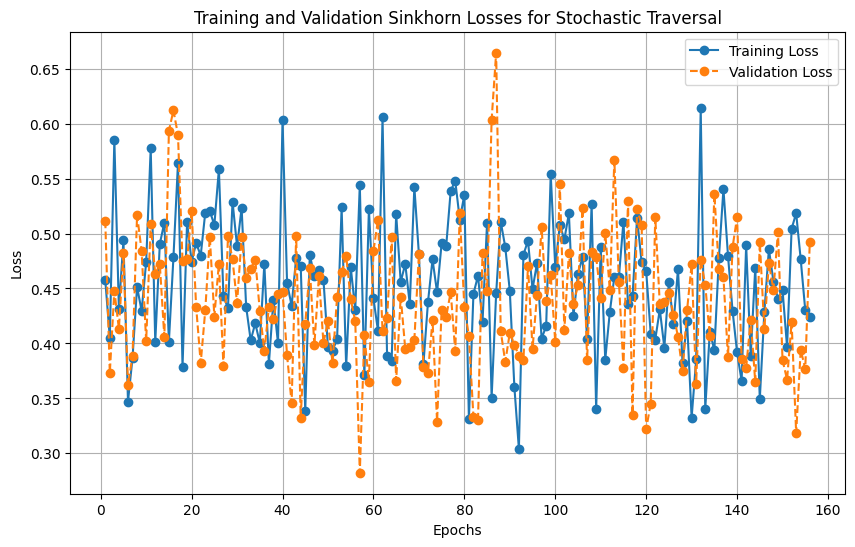}
\caption{Training and validation losses for model 3 in the list in \ref{models}}
\label{fig:3}
\end{figure}

\begin{figure}[ht]
\centering
\includegraphics[width = 0.8\hsize]{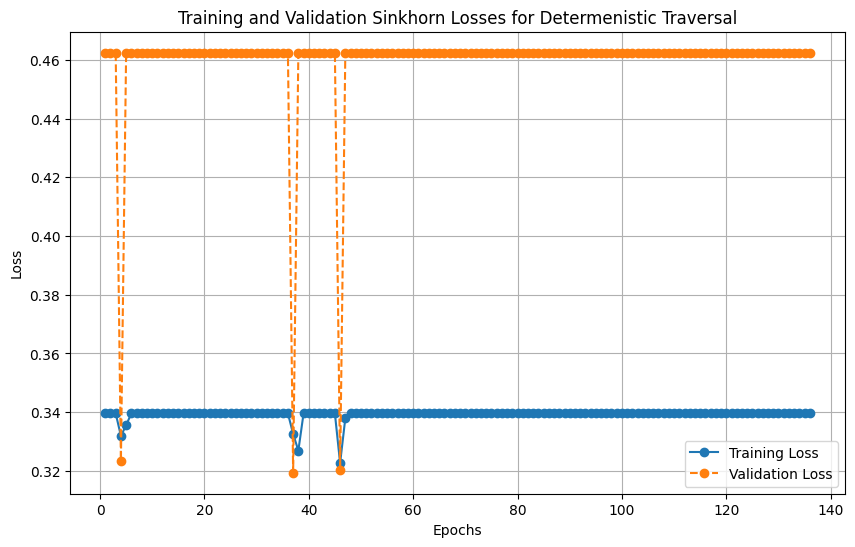}
\caption{Training and validation losses for model 4 in the list in \ref{models}}
\label{fig:4}
\end{figure}


\chapter{Conclusion}

\section{Limitations}

While we successfully developed a model capable of sampling combinatorial complexes (CCs), we were unable to establish an effective framework for training this model. Although the architecture of our model closely resembles that of DAMNETS, training it in a similar manner proved challenging due to fundamental differences in data representation. Specifically, as discussed in Section~\ref{representation}, we cannot index the cells represented as rows in co-incidence matrices the same way as in adjacency matrices. To address this, we developed a row-wise permutation-invariant loss function for training. However, we found evidence that the resulting loss landscape is either highly complex or flat, with numerous local minima. This complexity has prevented the model from learning effectively, especially for larger networks.

Moreover, the assumption that real-world network time series follow a Markovian process may not hold in many cases. For example, in dynamic networks with cyclical structures, predicting the direction of evolution may require accounting for multiple past timesteps rather than relying solely on the previous one.

Additionally, our model is unable to generate 0-cells (nodes) or handle datasets with a variable number of 0-cells. In real-world networks, such as social networks, the appearance of new users and the disappearance of old ones is common, a dynamic our model currently cannot capture.

Furthermore, the datasets we utilized were not inherently in CC form. Instead, we employed a deterministic method to lift graph-based data into CC form. Consequently, all the information in the CCs was already present in the original graphs, meaning no new information was encoded into the higher-rank cells. Future work will require datasets that cannot be accurately represented by graphs without losing critical data.

\section{Achievements}

We have successfully developed a model capable of handling relatively large CC time series by employing a tree-based approach that leverages the sparsity of incidence matrices. To the best of our knowledge, this is the first model in the literature that attempts to generate such time series.

In addition, we have introduced and adapted novel loss and evaluation functions specifically designed for combinatorial complexes, rather than simply adapting techniques from graph-based models. This also represents a new contribution to the field.

Moreover, we evaluated and compared two state-of-the-art models, AGE and DAMNETS, on real-world datasets that they were not originally trained on. This approach, which had not been attempted in previous literature, allowed for a more comprehensive evaluation of model generalization. Finally, we have addressed important issues related to the representation of combinatorial complexes, contributing to future developments in the field.

\section{Future Directions}

Improving the training framework for our model is critical. One potential solution is to explore alternative loss functions that could simplify the optimization landscape, reducing the prevalence of local minima and flat regions. Techniques such as curriculum learning, where the model is trained progressively on increasingly complex datasets, could provide a more structured path toward effective model convergence, especially for larger networks.

Another avenue for improvement lies in relaxing the Markovian assumption for network time series. Future work could extend the model to account for longer temporal dependencies by incorporating encoding a series of CC's rather than a single CC to capture cyclical or long-term trends.

Addressing the inability to generate and handle  $0$-cell (node) counts is also essential for making the model applicable to more dynamic and realistic datasets, such as social networks where nodes may appear and disappear over time. One idea that immediately comes to mind is, as well as sampling rows, we could incorporate an element which samples columns in a similar manner. This way we can effectively sample $0$-cells as well.

At the moment there exist no CC datasets. Future work can focus on acquiring and developing datasets that are inherently structured as CCs, where higher-order interactions cannot be fully represented by graphs. 

Improving the computational efficiency of the model will be essential as larger and more complex networks are tackled. Investigating distributed and parallel computing methods, as well as optimizing the implementation of sparse matrix operations, could allow the model to scale CCs without compromising performance. Much like the Data\_Loader class in Torch Geometric package, we need an efficient way to batch CCs, which our model suffered from the lack of. 

Lastly, once we have methods which can do all of the above, rather than just prediction, we can investigate whether we can strategically perturb these networks to evolve in a desired manner, and test whether these strategies translate to real life.

\bibliographystyle{plain}  
\bibliography{references}

\appendix

\chapter{Static Graph Statistics}
\label{appendix_a}

Given an undirected graph \( G \) with \( n \) nodes, let \( A \) be its adjacency matrix, where:
\[
A_{ij} = \begin{cases}
1 & \text{if nodes } i \text{ and } j \text{ are connected,} \\
0 & \text{otherwise.}
\end{cases}
\]

\section{Degree}
\label{degree}

The degree \( k_i \) of node \( i \) is calculated by summing the entries in the \( i \)-th row (or column) of the adjacency matrix:
\[
k_i = \sum_{j=1}^{n} A_{ij}.
\]
Since the graph is undirected and has no self-loops, \( A_{ij} = A_{ji} \) and \( A_{ii} = 0 \).

\section{Degree Centrality}

Degree centrality \( C_D(i) \) of node \( i \) normalizes the degree by the maximum possible degree (\( n - 1 \)):
\[
C_D(i) = \frac{k_i}{n - 1}.
\]
This value ranges from 0 to 1, indicating the centrality of the node in the network.

\section{Local Clustering Coefficient}

The local clustering coefficient \( C(i) \) measures how close the neighbours of node \( i \) are to being a complete graph (clique). It is given by:
\[
C(i) = \frac{2 e_i}{k_i (k_i - 1)},
\]
where \( e_i \) is the number of edges between the neighbours of node \( i \).

Using the adjacency matrix, \( e_i \) can be computed as:
\[
e_i = \frac{1}{2} \sum_{j=1}^{n} \sum_{k=1}^{n} A_{ij} A_{ik} A_{jk}.
\]
Thus, the local clustering coefficient becomes:
\[
C(i) = \frac{ \sum_{j=1}^{n} \sum_{k=1}^{n} A_{ij} A_{ik} A_{jk} }{ k_i (k_i - 1) }.
\]

\section{Closeness Centrality}

Closeness centrality \( C_C(i) \) of node \( i \) is the reciprocal of the average shortest path distance to all other nodes:
\[
C_C(i) = \frac{n - 1}{\sum_{\substack{j=1 \\ j \ne i}}^{n} d(i, j)}.
\]
where \( d(i, j) \) is the length of the shortest path between nodes \( i \) and \( j \).

\section{Eigenvalue Centrality}

Eigenvalue centrality \( C_E(i) \) assigns relative scores to all nodes based on the principle that connections to high-scoring nodes contribute more to the score of the node. It is defined by the eigenvector corresponding to the largest eigenvalue of \( A \):

\[
A x = \lambda_{\text{max}} x,
\]

where:
\begin{itemize}
    \item \( \lambda_{\text{max}} \) is the largest eigenvalue of \( A \).
    \item  \( x \) is the eigenvector associated with \( \lambda_{\text{max}} \). Each component \( x_{i} \) of this vector corresponds to a node in the graph and represents that node's centrality score.
\end{itemize}

The eigenvector \( x \) provides the centrality scores:

\[
C_E(i) = x_i.
\]

\section{Average Clustering Coefficient}

The average clustering coefficient \( \bar{C} \) is the mean of the local clustering coefficients of all nodes:
\[
\bar{C} = \frac{1}{n} \sum_{i=1}^{n} C(i).
\]
This metric gives an overall indication of the clustering in the network.

\section{Transitivity}

Transitivity \( T \) (also known as the global clustering coefficient) measures the overall likelihood that two connected nodes have a common neighbour. It is defined as:

\[
T = \frac{3 \tau}{\Delta}.
\]

Where \( \tau \) and \( \Delta \) are the number of triangles in the graph, mathematically defined below.

\subsection{Number of Triangles \texorpdfstring{$\tau$}{tau}}

The total number of triangles in the graph is:
\[
\tau = \frac{1}{6} \sum_{i=1}^{n} \sum_{j=1}^{n} \sum_{k=1}^{n} A_{ij} A_{jk} A_{ki}.
\]
The factor \( \frac{1}{6} \) accounts for each triangle being counted six times due to permutations of \( i, j, k \).

\subsection{Number of Connected Triplets \texorpdfstring{$\Delta$}{Delta}}

A connected triplet consists of a central node connected to two others. The total number is:
\[
\Delta = \sum_{i=1}^{n} \binom{k_i}{2} = \sum_{i=1}^{n} \frac{k_i (k_i - 1)}{2}.
\]

\section{Spectrum of the Graph Laplacian}

\begin{remark}
    This metric is relatively new, introduced in \cite{liao2020efficientgraphgenerationgraph}.
\end{remark}

\subsection{Degree Matrix \texorpdfstring{$D$}{D}}

The degree matrix \( D \) is a diagonal matrix where each diagonal element represents the degree of node \( i \):
\[
D_{ii} = k_i.
\]
where $k_i$ is as defined in \ref{degree}.

\subsection{Laplacian Matrix \texorpdfstring{$L$}{L}}

The Laplacian matrix \( L \) is defined as:
\[
L = D - A.
\]

\subsection{Eigenvalues and Eigenvectors}

Compute the eigenvalues \( \lambda \) and eigenvectors \( x \) of the Laplacian matrix \( L \):
\[
L x = \lambda x.
\]

The \textbf{spectrum} of the graph Laplacian is the set of eigenvalues \( \lambda_1, \lambda_2, \dots, \lambda_n \) of the Laplacian matrix \( L \).

\chapter{Temporal Graph Statistics}
\label{appendix_b}

\section{Temporal Correlation}

Temporal correlation measures the relationship between the degrees of nodes over time in a dynamic graph. Let \( k_i(t) \) and \( k_j(t) \) be two time series representing the degrees of nodes \( i \) and \( j \) at time \( t \). The temporal correlation \( \text{Corr}(k_i, k_j) \) is given by:

\[
\text{Corr}(k_i, k_j) = \frac{\sum_{t=1}^{T} (k_i(t) - \bar{k_i})(k_j(t) - \bar{k_j})}{\sqrt{\sum_{t=1}^{T} (k_i(t) - \bar{k_i})^2 \sum_{t=1}^{T} (k_j(t) - \bar{k_j})^2}},
\]

where:
\begin{itemize}
    \item \( \bar{k_i} = \frac{1}{T} \sum_{t=1}^{T} k_i(t) \) is the mean degree of node \( i \) over time.
    \item \( \bar{k_j} = \frac{1}{T} \sum_{t=1}^{T} k_j(t) \) is the mean degree of node \( j \) over time.
    \item \( T \) is the total number of time steps.
\end{itemize}

\section{Temporal Closeness}

Temporal closeness for a node in a dynamic graph measures how quickly the node can reach other nodes over time. For a node \( i \) at time \( t \), its temporal closeness centrality \( C_C(i, t) \) is defined as:

\[
C_C(i, t) = \frac{1}{\sum_{j \neq i} d_{ij}(t)}.
\]

where:
\begin{itemize}
    \item \( d_{ij}(t) \) is the shortest path distance from node \( i \) to node \( j \) at time \( t \), considering only the paths available up to time \( t \).
    \item The sum \( \sum_{j \neq i} \) runs over all nodes \( j \) different from \( i \) in the graph.
\end{itemize}

\section{Temporal Clustering Coefficient}

The temporal clustering coefficient measures the tendency of a node's neighbours to form a cluster over time. For a node \( i \) at time \( t \), the temporal clustering coefficient \( C_T(i, t) \) is defined as:

\[
C_T(i, t) = \frac{2 E_{i, t}}{k_i(t) (k_i(t) - 1)}.
\]

where:
\begin{itemize}
    \item \( k_i(t) \) is the degree of node \( i \) at time \( t \), i.e., the number of neighbours of node \( i \) at time \( t \).
    \item \( E_{i, t} \) is the number of edges that exist between the neighbours of node \( i \) up to time \( t \).
\end{itemize}

\chapter{Algorithms}
\label{appendix_c}

\section{Hungarian Algorithm}

Given two matrices \( \mathbf{A} \in \mathbb{R}^{N_A \times D} \) and \( \mathbf{B} \in \mathbb{R}^{N_B \times D} \), we define a pairwise cost (or distance) matrix \( \mathbf{C} \in \mathbb{R}^{N_A \times N_B} \), where each element \( C_{ij} \) represents the cost of assigning the \( i \)-th row of \( \mathbf{A} \) to the \( j \)-th row of \( \mathbf{B} \).

\textbf{Steps of the Hungarian Algorithm:}

\begin{enumerate}
    \item \textbf{Matrix Preparation:} 
    \begin{itemize}
        \item Construct the cost matrix \( \mathbf{C} \). If \( N_A \neq N_B \), pad the smaller matrix with rows of zeros so that \( N_A = N_B \).
    \end{itemize}
    
    \item \textbf{Row and Column Reduction:}
    \begin{itemize}
        \item Subtract the smallest element in each row from all elements of that row.
        \item Subtract the smallest element in each column from all elements of that column.
    \end{itemize}
    
    \item \textbf{Zero-Covering and Augmentation:}
    \begin{itemize}
        \item Cover all zeros in the matrix using a minimum number of horizontal and vertical lines.
        \item If the number of lines equals \( N \) (where \( N \) is the number of rows or columns after padding), proceed to the assignment step. Otherwise, adjust the matrix by subtracting the smallest uncovered element from all uncovered elements and adding it to elements covered twice. Repeat the zero-covering step.
    \end{itemize}
    
    \item \textbf{Optimal Assignment:}
    \begin{itemize}
        \item Assign one-to-one pairs based on the uncovered zeros. Each zero represents a potential assignment.
        \item Ensure the assignment minimizes the total cost \( \sum_{i=1}^{N_A} C_{i\sigma(i)} \).
    \end{itemize}
\end{enumerate}

The permutation \( \sigma \) obtained from this process provides the optimal matching between the rows of \( \mathbf{A} \) and \( \mathbf{B} \). The final loss for the RWPL function is then computed as:

\[
\text{RWPL} = \sum_{i=1}^{N_A} C_{i\sigma(i)}
\]

This ensures that the loss function is invariant to the permutation of rows, making it suitable for comparing co-incidence matrices where cell indices may differ. We do not provide proof that this algorithm is guaranteed to find the best matching, however, the interested reader can find it in the original paper \cite{hungarian}.

\section{Sinkhorn Algorithm}

The Sinkhorn algorithm is used to solve the optimal transport problem approximately. The Gibbs kernel \( K \) is computed from the BCE-based cost matrix \( C \) as:

\[
K = \exp\left(-\frac{C}{\epsilon}\right)
\]

where \( \epsilon \) is the entropy regularization parameter. The scaling vectors \( \mathbf{u} \) and \( \mathbf{v} \) are iteratively updated as follows:

\[
\mathbf{u}^{(t+1)} = \frac{\mu}{K \mathbf{v}^{(t)}}, \quad \mathbf{v}^{(t+1)} = \frac{\nu}{K^T \mathbf{u}^{(t+1)}}
\]

Here, \( \mu \) and \( \nu \) are uniform distributions over the rows of \( \mathbf{A} \) and \( \mathbf{B} \), respectively. The iteration process continues until convergence or the maximum number of iterations is reached.

The optimal transport plan \( \pi \) is then computed as:

\[
\pi = \text{diag}(\mathbf{u}) K \text{diag}(\mathbf{v})
\]

Finally, the Sinkhorn distance, representing the row-wise permutation-invariant loss using BCE, is given by:

\[
\text{Sinkhorn Distance} = \sum_{i=1}^{N_A} \sum_{j=1}^{N_B} \pi_{ij} C_{ij}
\]

This distance measures the minimum cost required to optimally match the rows of \( \mathbf{A} \) to the rows of \( \mathbf{B} \) under the regularized optimal transport plan \( \pi \). The Sinkhorn algorithm is guaranteed to converge under fairly general assumptions. We do not provide a proof for this claim, however, the interested reader can find it in the original paper \cite{Sinkhorn1967ConcerningNM}.

\end{document}